\documentclass{article}
\usepackage{log_2024}						

\usepackage[T1]{fontenc}

\pdfoutput=1

\usepackage[british]{babel}

\usepackage{amsmath}
\usepackage{amssymb}
\usepackage{amsthm}
\usepackage{stmaryrd}

\usepackage[%
  giveninits   = true,
  maxcitenames = 2,
  maxbibnames  = 99,
  hyperref     = true,
  backend      = biber,
  style        = numeric,
  sorting      = nty,
  sortcites,
]{biblatex}

\usepackage{algorithm}
\usepackage{algorithmicx}
\usepackage{algpseudocode}

\makeatletter
  \algrenewcommand\ALG@beginalgorithmic{\ttfamily}
\makeatother

\usepackage{booktabs}
\usepackage{csquotes}
\usepackage{dsfont}
\usepackage{mathtools}
\usepackage{microtype}
\usepackage{mleftright}
\usepackage{multirow}
\usepackage{nicefrac}
\usepackage{paralist}
\usepackage{siunitx}
\usepackage{soul}
\usepackage{subcaption}
\usepackage{thmtools}
\usepackage{wrapfig}
\usepackage{xpatch}
\usepackage{xspace}
\usepackage{adjustbox}
\usepackage{titletoc}
\usepackage{graphicx}

\setdefaultleftmargin{1em}{1em}{1em}{1em}{1em}{1em}

\usepackage{hyperref}
\usepackage[capitalize, nameinlink]{cleveref}


\usepackage[hang, flushmargin]{footmisc}

\captionsetup{
  labelfont = sc,
  format    = plain,
  font      = smaller,
}

\captionsetup[sub]{
  labelfont = normalfont,
}

\captionsetup[algorithm]{
  labelfont = sc,
  font      = smaller,
}


\definecolor{bleu}     {RGB}{ 49,140,231}
\definecolor{cardinal} {RGB}{196, 30, 58}
\definecolor{celtic}   {RGB}{ 36,107,206}
\definecolor{lightgrey}{RGB}{230,230,230}

\usepackage{tikz}
\usetikzlibrary{cd}
\usepackage{pgfplots}
\pgfplotsset{compat = 1.18}

\usetikzlibrary{calc}
\usetikzlibrary{positioning}


\DefineBibliographyExtras{british}{}

\addbibresource{main.bib}


\newtheorem{definition} {Definition}
\newtheorem{lemma}      {Lemma}
\newtheorem{corollary}  {Corollary}

\newtheorem{theorem}    {Theorem}

\newcommand{\ball}[2]                   {\ensuremath{\mathrm{B}_{#1}\mleft(#2\mright)}}
\newcommand{\betti}[1]                  {\ensuremath{\beta_{#1}\xspace}}
\newcommand{\boundary}               [1]{\ensuremath{\partial_{#1}}}

\newcommand{\chaingroup}             [1]{\ensuremath{C_{#1}}}
\newcommand{\diagram}                   {\ensuremath{\mathcal{D}}\xspace}

\newcommand{\edges}                     {\ensuremath{E}\xspace}
\newcommand{\graph}                     {\ensuremath{G}\xspace}
\newcommand{\graphs}                    {\mathcal{G}\xspace}
\newcommand{\homologygroup}[1]          {\ensuremath{\mathrm{H}_{#1}}}
\newcommand{\iso}                       {\ensuremath{\varphi}\xspace}
\newcommand{\multiplicity}[1]           {\ensuremath{\mu^{(#1)}}\xspace}
\newcommand{\naturals}                  {\ensuremath{\mathds{N}}\xspace}
\newcommand{\neighbours}                {\ensuremath{\mathcal{N}}\xspace}
\newcommand{\persistentbetti}        [2]{\ensuremath{\betti{#1}^{#2}}}
\newcommand{\persistenthomologygroup}[2]{\ensuremath{\homologygroup{#1}^{#2}}}
\newcommand{\reals}                     {\ensuremath{\mathds{R}}\xspace}
\newcommand{\vertices}                  {\ensuremath{V}\xspace}
\newcommand{\WL}                        {\mbox{$1$-WL}\xspace}
\newcommand{\kWL}[1][k]                 {\mbox{$#1$-FWL}\xspace}
\newcommand{\simplicialcomplex}         {\mathrm{K}}
\newcommand{\simplicialcomplexes}       {\mathcal{K}}
\newcommand{\cycle}                     {\ensuremath{\gamma}\xspace}

\newcommand*\concat{\mathbin{\|}}

\DeclareMathOperator{\degree}     {deg}
\DeclareMathOperator{\diam}       {diam}
\DeclareMathOperator{\db}         {d_B}
\DeclareMathOperator{\wasserstein}{W\!}
\DeclareMathOperator{\id}         {id}
\DeclareMathOperator{\im}         {im}

\DeclareMathOperator{\rank}       {rank}

\newcommand{\tup}     [1]{\mathbf{#1}}
\newcommand{\multiset}[1]{\left\{\!\!\left\{#1\right\}\!\!\right\}}


\tikzset{%
  vertex/.style = {%
    circle,
    fill         = black,
    draw         = black,
    minimum size = 4pt,
    inner sep    = 0pt,
  },
  edge/.style = {%
    draw,
  },
}


\title[On the Expressivity of Persistent Homology in Graph Learning]{On the Expressivity of Persistent Homology in Graph Learning}

\author[R.\ Ballester and B.\ Rieck]{%
  Rub\'en Ballester\\
  Departament de Matem\`atiques i Inform\`atica\\
  Universitat de Barcelona\\
\email{ruben.ballester@ub.edu}\And
  Bastian Rieck\\
  Department of Computer Science\\
  University of Fribourg\\
\email{bastian.grossenbacher@unifr.ch}
}

\begin{document}

\maketitle

\begin{abstract}
  Persistent homology, a technique from computational topology, has
  recently shown strong empirical performance in the context of graph
  classification. Being able to capture long range graph properties
  via higher-order topological
  features, such as cycles of arbitrary length, in combination with
  multi-scale topological descriptors, has improved predictive
  performance for data sets with prominent topological structures, such
  as molecules. At the same time, the \emph{theoretical properties} of
  persistent homology have not been formally assessed in this context.
  This paper intends to bridge the gap between computational topology and
  graph machine learning by providing a brief introduction to
  persistent homology in the context of graphs, as well as
  a theoretical discussion and empirical analysis of its expressivity
  for graph learning tasks.
\end{abstract}

\section{Introduction}

Graph learning is a highly-active research domain in machine learning,
fuelled in large parts by the \emph{geometric deep learning}~\autocite{Bronstein21a, Bronstein17a}
paradigm as well as the resurgence of new neural network architectures for handling
graph data. Methods from computational topology, by contrast, have not
yet been applied in this domain at large scales. Even though a large
amount of prior work employs topological features to solve graph learning
tasks~\autocite{Carriere20a, Chen21a, Hofer17, Hofer19, Hofer20,
Horn22a, Rieck19b, Yan22a, Ye23a, Zhao19, Zhao20a}, a formal
investigation relating expressivity in graph learning and topological
machine learning is still lacking.\footnote{
  A notable exception is recent work by \textcite{Immonen23a}, which
  analyses the expressivity of topological methods for graph
  classification tasks.
}
We believe that this is largely a deficit in communication between the
communities. This paper provides an introduction to topological
methods for graph learning, while also showing new theoretical and
empirical results about the \emph{expressivity} of topological graph
learning methods. Here, we understand expressivity as a general
concept to signify which graph properties can be captured by a method.
This includes being aware of certain substructures in
graphs~\autocite{Chen20a}, for instance, but also being able to
distinguish large classes of non-isomorphic
graphs~\autocite{Bouritsas23a, Joshi23a, Sato20a}.
While graph neural networks have demonstrated substantial gains in
this area, our paper focuses on topology-based algorithms, aiming to
provide a better understanding of their theoretical and empirical
properties.

\paragraph{Contributions.}
%
Our main theoretical contribution is a full
characterisation of the expressivity of \emph{persistent homology} in
terms of the Weisfeiler--Leman hierarchy~\autocite{Xu19a, Morris19a}.
We prove that persistent homology is \emph{at least as expressive} as
a corresponding Weisfeiler--Leman test for graph isomorphism.
Moreover, we show that there exist graphs that cannot be distinguished
using $\kWL$, the \emph{folklore Weisfeiler--Leman
algorithm}~\autocite{Morris21a}, for a specific dimension~$k$ but that can be
distinguished by persistent
homology~(with or without access to $k$-cliques in the graph). Along
the way, we also prove new properties of filtrations, i.e.\ descriptor
functions that are commonly employed to obtain topological
representations of graphs, hinting at their ability to capture
information about graph substructures.
We complement our theoretical expressivity discussions by an
experimental suite that highlights the capabilities of different
filtrations for
\begin{inparaenum}[(i)]
  \item distinguishing certain types of graphs,
  \item predicting characteristic graph properties, and
  \item serving as a baseline for classification tasks.\footnote{Our code 
    is available at \url{https://github.com/aidos-lab/PH_expressivity}.}
\end{inparaenum}

\paragraph{Guide for readers.}
%
\cref{sec:Background} briefly summarises the main concepts in graph
learning.
\cref{sec:Topological Features of Graphs} and
\cref{sec:Topological Features of Simplicial Complexes},
may be skipped by
readers that are already well versed in computational topology.
\cref{sec:Properties of Filtrations} outlines advantageous
properties of filtrations in the context of graph learning, while
\cref{sec:WL} discusses the expressivity of
persistent homology with respect to the Weisfeiler--Leman hierarchy of
graph isomorphism tests.
\cref{sec:Experiments} concludes the paper with an experimental
suite that highlights the performance of topological methods in
a variety of graph-learning tasks.

\section{Background \& Notation}\label{sec:Background}
%
\begin{figure}[tbp]
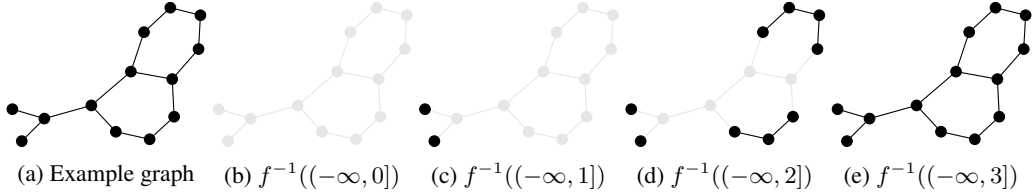

  \centering
  \subcaptionbox{Example graph}{%
    \begin{tikzpicture}
      \input{Figures/Example_graph_skeleton}

      \draw[edge] (A) -- (M); \draw[edge] (B) -- (M);
      \draw[edge] (M) -- (L); \draw[edge] (L) -- (I);
      \draw[edge] (I) -- (C); \draw[edge] (C) -- (D);
      \draw[edge] (D) -- (J); \draw[edge] (J) -- (K);
      \draw[edge] (K) -- (L); \draw[edge] (K) -- (E);
      \draw[edge] (E) -- (F); \draw[edge] (F) -- (G);
      \draw[edge] (G) -- (H); \draw[edge] (H) -- (J);

      \foreach \c in {A, B, ..., M}
      \node[vertex] at (\c) {};
    \end{tikzpicture}%
  }
  \subcaptionbox{$f^{-1}\mleft((-\infty, 0]\mright)$}{%
    \begin{tikzpicture}
      \input{Figures/Example_graph_skeleton}
    \end{tikzpicture}%
  }
  \subcaptionbox{$f^{-1}\mleft((-\infty, 1]\mright)$}{%
    \begin{tikzpicture}
      \input{Figures/Example_graph_skeleton}

      \foreach \c in {A, B}
      \node[vertex] at (\c) {};
    \end{tikzpicture}%
  }
  \subcaptionbox{$f^{-1}\mleft((-\infty, 2]\mright)$}{%
    \begin{tikzpicture}
      \input{Figures/Example_graph_skeleton}

      \foreach \c in {A, B, C, D, E, F, G, H, I}
      \node[vertex] at (\c) {};

      \draw[edge] (I) -- (C); \draw[edge] (C) -- (D);
      \draw[edge] (E) -- (F); \draw[edge] (F) -- (G);
      \draw[edge] (G) -- (H);
    \end{tikzpicture}%
  }
  \subcaptionbox{$f^{-1}\mleft((-\infty, 3]\mright)$}{%
    \begin{tikzpicture}
      \input{Figures/Example_graph_skeleton}

      \foreach \c in {A, B, ..., M}
      \node[vertex] at (\c) {};

      \draw[edge] (A) -- (M); \draw[edge] (B) -- (M);
      \draw[edge] (M) -- (L); \draw[edge] (L) -- (I);
      \draw[edge] (I) -- (C); \draw[edge] (C) -- (D);
      \draw[edge] (D) -- (J); \draw[edge] (J) -- (K);
      \draw[edge] (K) -- (L); \draw[edge] (K) -- (E);
      \draw[edge] (E) -- (F); \draw[edge] (F) -- (G);
      \draw[edge] (G) -- (H); \draw[edge] (H) -- (J);
    \end{tikzpicture}%
  }
  \caption{%
    An example graph and three different steps of a degree-based
    filtration. The respective caption indicates the pre-image of the
    corresponding filtration function.
  }
  \label{fig:Degree filtration example}
\end{figure}
%
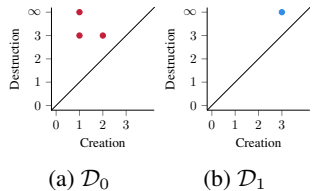
\begin{wrapfigure}[14]{l}{0.40\linewidth}
  \centering
  \subcaptionbox{$\diagram_0$}{%
    \begin{tikzpicture}[scale = 0.50]
      \begin{axis}[%
          axis x line*      = bottom,
          axis y line*      = left,
          unit vector ratio = 1 1 1,
          xmin              = 0,
          xmax              = 4,
          ymin              = 0,
          ymax              = 4,
          mark size         = 2pt,
          width             = 5cm,
          enlargelimits     = 0.05,
          tick align        = outside,
          xlabel            = {Creation},
          ylabel            = {Destruction},
          xtick             = {0, 1, 2, 3},
          xticklabels       = {$0$, $1$, $2$, $3$},
          ytick             = {0, 1, 2, 3, 4},
          yticklabels       = {$0$, $1$, $2$, $3$, $\infty$},
        ]
        \addplot[domain={-2:16}, black, no marks] {x};
        \addplot[only marks, cardinal] coordinates {
            (1, 3)
            (2, 3)
          };
        \addplot[only marks, cardinal] coordinates {
            (1, 4)
          };
      \end{axis}
    \end{tikzpicture}
  }%
  \subcaptionbox{$\diagram_1$}{%
    \begin{tikzpicture}[scale = 0.50]
      \begin{axis}[%
          axis x line*      = bottom,
          axis y line*      = left,
          unit vector ratio = 1 1 1,
          xmin              = 0,
          xmax              = 4,
          ymin              = 0,
          ymax              = 4,
          mark size         = 2pt,
          width             = 5cm,
          enlargelimits     = 0.05,
          tick align        = outside,
          xlabel            = {Creation},
          ylabel            = {Destruction},
          xtick             = {0, 1, 2, 3},
          xticklabels       = {$0$, $1$, $2$, $3$},
          ytick             = {0, 1, 2, 3, 4},
          yticklabels       = {$0$, $1$, $2$, $3$, $\infty$},
        ]
        \addplot[domain={-2:16}, black, no marks] {x};

        \addplot[only marks, bleu] coordinates {
            (3, 4)
          };
      \end{axis}
    \end{tikzpicture}
  }
  \caption{%
    \hyphenpenalty=10000
    Persistence diagrams of the filtration depicted in
    \cref{fig:Degree filtration example}.
    Points can have different multiplicities; we show \emph{essential
      features}, i.e.\ topological features that persist over the full
    filtration using an $\infty$ symbol.
  }
  \label{fig:Degree filtration example persistence diagrams}
\end{wrapfigure}

We deal with undirected graphs in this paper. An
undirected graph $\graph$ is a pair $\graph = (\vertices, \edges)$
of finite sets of~$n$ \emph{vertices} and~$m$ \emph{edges}, with $\edges
  \subseteq \mleft\{ \mleft\{u, v\mright\} \mid u, v \in \vertices, u \neq v \mright\}$.
We will also refer to an edge using tuple
notation, with the understanding that $(u, v)$ and $(v, u)$ refer to
the same edge.
Furthermore, we denote the set of all such graphs by~$\graphs$.
Two graphs $\graph = (\vertices, \edges)$ and $\graph' = (\vertices',
\edges')$ are \emph{isomorphic}
, denoted by
$\graph \simeq \graph'$, if there is a bijective function
$\iso\colon\vertices\to\vertices'$ that preserves adjacency, i.e.\ $(u,
  v) \in \edges$ if and only if $\mleft(\iso(u), \iso(v)\mright) \in \edges'$.
Since~$\iso$ is bijective, it has an inverse function, which we will
denote by~$\iso^{-1}$.
The problem of figuring out whether two graphs are isomorphic or not
is referred to as the \emph{graph isomorphism problem}. Presently,
there is no known algorithm that solves this problem in polynomial
time---efficient algorithms exist only for special families of
graphs~\autocite{Kelly57a, Colbourn81a}.
Hence, all subsequently-discussed graph isomorphism tests are perforce
limited with respect to their expressivity, i.e.\ there exist classes
of non-isomorphic graphs that they cannot distinguish. In the context
of graph isomorphism tests, we will often require the definition of
a \emph{multiset}, which is a set whose elements are included with
multiplicities. We will denote such a multiset by $\multiset{}$.

In this paper, we use the term \emph{graph expressivity} to denote two
different concepts: first, the ability of a method to distinguish
non-isomorphic graphs, and second, the ability of a method to capture
certain graph properties.
\begin{definition}[Expressivity as distinguishing non-isomorphic graphs] Let 
  $f_1\colon\graphs \to \mathcal{Y}_1$ and $f_2\colon\graphs \to \mathcal{Y}_2$ be two functions
  such that for two isomorphic graphs $\graph, \graph'$ we have that
  $f_1(\graph) = f_1(\graph')$ and $f_2(\graph) = f_2(\graph')$. 
  We say that $f_1$ is \emph{at least as expressive}
  as $f_2$ if for any pair of non-isomorphic graphs $\graph, \graph'$ we
  have that
  $f_2(\graph) \neq f_2(\graph')$ implies $f_1(\graph) \neq f_1(\graph')$.
\end{definition}
\begin{definition}[Expressivity as detecting graph properties]
  Let 
  $f_1, f_2, t\colon\graphs\to\reals^e$ be functions for some $e$ .
  We say that $f_1$ is \emph{at least as expressive} as $f_2$ according
  to property $t$ if for any graph $\graph$ we have that $f_2(\graph)
  = t$ implies $f_1(\graph) = t$. 
\end{definition}
In both cases, if $f_1$ is at least as expressive as $f_2$ and $f_2$ is
at least as expressive as $f_1$, we say that $f_1$ and $f_2$ are equally
expressive. A more complete characterization of expressivity can be
found in~\textcite[Definition~5.5]{GNNBook-ch5-li}.

\paragraph{Topological Features of Graphs}\label{sec:Topological Features of Graphs}

The simplest kind of topological features to prescribe to graphs are
\emph{connected components} and \emph{cycles}. 
Formally, we refer to the number of connected components and 
the number of~(independent) cycles in a graph as the first two 
\emph{Betti numbers} of a graph, denoted as $\betti{0}$ and $\betti{1}$,
respectively.
While the expressive power of these two numbers is limited, it can be
improved by evaluating them alongside a \emph{filtration}, i.e.\
a sequence of nested subgraphs of the form
$\emptyset \subseteq \graph_0 \subseteq \graph_1 \ldots \subseteq
  \graph_{k-1} \subseteq \graph_k = G$.
Filtrations typically arise from scalar-valued functions of the form
$f\colon\graph\to\reals$, which assign vertices and edges a value. 
Concretely, a filtration sequence of indices requires sorting the
values of the image of the filtration function~$f$, with subgraphs
$\graph_a$ being assigned the \emph{pre-images} of intervals $(-\infty,
a]$ under~$f$, i.e.\ $G_a = f^{-1}((-\infty, a])$. 
Filtrations are usually obtained from more general maps
$F\colon\graphs\to \reals^\graph, \graph \mapsto f_{\graph}$, which
assign to each graph a filtration function. In this case, we say that $F$ is
 \emph{equivariant} if, given any two graphs~$\graph, \graph'$ and
any graph isomorphism $\iso\colon\graph\to\graph'$, we have that 
$f_{\graph} = f_{\graph'}\circ \iso$.
Changes in the Betti numbers can be tracked over the course of the
filtration.
This leads to \emph{persistent Betti numbers}, which are typically
summarised in a \emph{persistence diagram}, i.e.\ a topological
descriptor, consisting of tuples $(a_i, a_j) \in \reals^2$, with $a_i$
referring to the value at which a topological feature was
\emph{created}, and $a_j$ referring to the value at which
a topological feature was destroyed~(for instance, because two connected
components are being merged).
The absolute difference in function values $|a_j - a_i|$ is called the
\emph{persistence} of a topological feature; it indicates the
prominence or relevance of said feature.
\cref{fig:Degree filtration example} depicts an example filtration for
a simple graph, where we use the degree of each vertex to filter the
graph.
The more complex structure of persistence
diagrams~(in comparison to the simple Betti numbers) already hints at
their capabilities in providing expressive graph descriptors.

\paragraph{Topological Features of Simplicial Complexes}
\label{sec:Topological Features of Simplicial Complexes}
%
Filtrations and persistence diagrams
generalise to higher dimensions as well, with the understanding that
the connectivity of higher-order structures of a graph---cliques---is
being modelled. The resulting framework is referred to as
\emph{persistent homology}.
There are different constructions for obtaining simplicial complexes
from graphs; we will subsequently use \emph{clique
complexes}~(in which a \mbox{$k$-clique} is represented by
a \mbox{$(k-1)$-simplex}) because it is
\begin{inparaenum}[(i)]
  \item straightforward to implement, and
  \item cliques are known to be characteristic graph structures~\autocite{Bouritsas23a}.
\end{inparaenum}
Persistent homology is typically calculated using matrix 
reduction algorithms, whose optimisation is still an ongoing topic of
research~\autocite{Bauer22a}.
We refer readers to \textcite{Otter17a} for a comprehensive
introduction of computational strategies. 
Readers are invited to read \cref{sec:A Primer in Computational
Topology} for a more detailed exposition of concepts in computational
topology.

\section{Properties of Filtrations}\label{sec:Properties of Filtrations}

As in the graph case, it is common to obtain simplicial complex
filtrations from a general map $F\colon \simplicialcomplex\in
\simplicialcomplexes\mapsto f_\simplicialcomplex\in
\reals^\simplicialcomplexes$ where $\simplicialcomplexes$ is the set of
all finite simplicial complexes. In the scenarios where we construct
simplicial complexes from graphs, we can promote $F$ to a function
$\mathcal{F}\colon \graph\in\graphs\mapsto (\simplicialcomplex_\graph,
f_{\simplicialcomplex_\graph})\in \simplicialcomplexes\times
\reals^\simplicialcomplexes$ that assigns to each graph a simplicial
complex with vertex set equal to the set of vertices of the graph 
and a filtration function for this simplicial complex.
We say that $\mathcal{F}$ is \emph{equivariant} if for each isomorphism
$\iso\colon \graph\to \graph'$ we have that $\iso$ induces a simplicial
complex isomorphism between $\simplicialcomplex_\graph$ and
$\simplicialcomplex_{\graph'}$ and $f_{\simplicialcomplex_\graph}
= f_{\simplicialcomplex_{\graph'}}\circ\iso$. If
$\simplicialcomplex_\graph = \graph$ for all $\graph\in\graphs$, then we
recover the previous definition of equivariance for the graph case.
A crucial result of the previous maps $\mathcal{F}$ is that, if they are
equivariant, then the persistence diagrams of the filtrations they induce are
invariant under graph isomorphism.
\begin{restatable}{proposition}{equivariantfiltrations}
  For $\mathcal{F}$ equivariant and $\graph \simeq \graph'$, the
  persistence diagrams of $f_{\simplicialcomplex_\graph}$ and
  $f_{\simplicialcomplex_{\graph'}}$ coincide.
  \label{prop:Equivariant filtrations}
\end{restatable}
\Cref{prop:Equivariant filtrations} shows that it is impossible to
`adversarially' pick an equivariant filtration generator function
$\mathcal F$ that leads to a dissimilarity between two non-isomorphic
graphs. Dropping this condition incurs a substantial loss of expressive
power.
The proposition thus guarantees that persistent homology is
compatible with equivariant function learning, pointing
towards the utility of hybrid models that leverage different types of
structural properties of graphs.
To finish this discussion of general properties, we remark that the
calculation of persistent homology carries information about the
\emph{diameter}~(the length of the longest shortest path) and
\emph{girth}~(the length of the shortest cycle) of a graph. 
\begin{restatable}{proposition}{diameterbound}
  Given \emph{any} 
  filtration~$f$ of a graph~$\graph$ with
  a single connected component such that the values of~$f$ of the endpoints
  of edges are \emph{strictly lower} than the values of their
  corresponding edges, \cref{alg:Connected components}, used to compute $\betti{0}$,
  yields an 
  upper bound of~$\diam(\graph)$.
  \label{thm:Diameter}
\end{restatable}
\begin{restatable}{proposition}{girthbound}
  Given \emph{any}
  filtration~$f$ of a graph~$\graph$,
  \cref{alg:Connected components} yields an upper bound of the girth of 
  $\graph$.
  \label{thm:Girth}
\end{restatable}
While the theoretical upper bounds are not tight, our empirical analysis
in \cref{sec:Predicting Graph Properties} shows that persistent homology
and its algorithms captures more than `just' topological
information about a graph.
Our work thus corroborates recent results in the setting of point
clouds, where persistence diagrams permit inferring additional
properties about the input data~\autocite{Bubenik20a, Lim20a,
Turkes22a}.

\section{The Weisfeiler--Leman Hierarchy}\label{sec:WL}

Having discussed the properties of specific filtrations, we now
analyse the expressivity of persistent homology in the context of the
Weisfeiler--Leman hierarchy of graph isomorphism tests. 
To this end, we categorise our filtrations~(and their corresponding 
generators) into two distinct classes:
\begin{compactenum}
  \item \label{enum:clique_filt} Filtrations operating on clique complexes 
  derived from graphs
  \item \label{enum:complete_fil} Filtrations that consider all possible 
  cliques of the vertex set up to a specified maximum dimension.
\end{compactenum}
In the first class, we incorporate only those cliques present in the
original graph, constrained to a predetermined maximum dimension.
Conversely, in the second class, we include \emph{all} feasible cliques
up to a specified dimension, irrespective of their presence in the
original graph structure.

\WL, also known as \emph{colour refinement}, constitutes a simple
method for addressing the graph isomorphism problem. It is the
backbone of graph expressivity research; readers are referred to
\textcite{Morris21a} for a comprehensive survey of \WL and its
higher-order variants. 
It is already a known result that \emph{any} \WL colouring can be
reproduced by creating a special
filtration~\autocite[Theorem~4]{Horn22a}; we restate this result as
\cref{thm:1-WL expressivity} in the appendix. For a longer discussion on
the \WL test, we refer readers to \Cref{sec:WL and Persistent Homology}.
Since \WL is also unable to distinguish between graphs with
different triangle counts or graphs with cycle information, it was
generalised to include information about labelling \emph{tuples} of
$k$~nodes~(as opposed to only labelling a single node), leading to a
hierarchy of algorithms.
The variant we shall subsequently describe is also
known as the \emph{folklore Weisfeiler--Leman
  algorithm}~\autocite{Morris21a}. It can be shown that there are
non-isomorphic graphs that cannot be distinguished by \kWL, but that
can be distinguished by \kWL[(k+1)].\footnote{%
  There are also other variants, for instance the \emph{oblivious
    Weisfeiler--Leman algorithm}. It slightly differs in the way tuples
  are being relabelled, but a paper by \textcite{Grohe21a} shows that
  the variant is essentially as powerful as \kWL~(with a minor shift
  in indices). The reader is referred to \textcite{Morris21a} and the
  references therein for an extended discussion of these aspects.
}
A well-known family of such non-isomorphic, non-distinguishable graphs
are commonly known as CFI graphs~\autocite{Cai89a}.
Following \textcite{Morris21a}, \kWL is based on the idea of assigning
colours to \emph{subgraphs} as opposed to assigning colours to
\emph{vertices}.
To achieve this, \kWL operates on \mbox{$k$-tuples} of vertices; for
iteration~$i = 0$, two tuples $\tup{v} = (v_1, \dots, v_k)$ and $\tup{w}
= (w_1, \dots, w_k)$ are assigned the same colour if the map $v_j
\mapsto w_j$ induces a graph isomorphism between the subgraphs induced
by~$\tup{v}$ and~$\tup{w}$, respectively. For subsequent iterations with
$i > 0$, we relabel the tuples similar to \WL, i.e.\
\begin{equation}
    C^{(k)}_i(\tup{v}) := \texttt{RELABEL}\mleft(\mleft(
    C^{(k)}_{i-1}(\tup{v}),
    \multiset{C^{(k)}_i\mleft(\phi_1\mleft(\tup{v}, u\mright)\mright),
    \dotsc, C^{(k)}_i\mleft(\phi_k\mleft(\tup{v}, u\mright)\mright) \mid u \in \neighbours(v)}\mright) \mright),
  \label{eq:k-WL}
\end{equation}
where $\phi_j(\tup{v}, u) := \mleft(v_1, \dotsc, v_{j-1}, u, v_{j+1},
  \dotsc, v_k\mright)$ refers to the function that replaces the $j$th
element of the \mbox{$k$-tuple} $\tup{v}$ with~$u$. This induces
a neighbourhood relation between tuples and just as in the case of
\WL, we run the algorithm until the assigned colours of tuples
stabilise for one graph. Similarly, if the colour sequences of two
graphs differ, the graphs are non-isomorphic.
As a generalisation of previous work~\autocite{Horn22a},
we can show that
\emph{any} \kWL colouring can be reproduced with 
one equivariant filtration of type~\labelcref{enum:complete_fil},
thus proving that persistent homology is \emph{at least} as expressive
as \kWL. We achieve this by showing that zero-dimensional persistence
diagrams can \emph{recover} any isomorphism-invariant real-valued
function $f\colon\graphs\to\reals$ on the space of finite graphs by
setting, for a given graph $\graph$, a constant filtration with value
equal to the function value in $\graph$, i.e.\ $f(\graph)$.
\begin{restatable}{theorem}{kwlexpressivity}
  For $k \geq 2$, there exists an equivariant filtration generator 
  $\mathcal{F}$
  of type~\labelcref{enum:complete_fil} such that its zero-dimensional 
  persistence diagrams
  are at least as expressive as \kWL[k].
  \label{thm:k-WL expressivity}
\end{restatable}
Both \cref{thm:k-WL expressivity} and \cref{thm:1-WL expressivity} may
not be completely satisfactory because they only show the
\emph{existence} of such a filtration, but make no claims about
the theoretical expressivity of filtrations of
type~\labelcref{enum:clique_filt} filtrations.
Hence, it 
is particularly interesting to understand the expressivity
of the Vietoris--Rips filtration~\autocite{Adams22a}, which is a popular 
choice in the context of persistent homology.
Given a graph $\graph = (\vertices, \edges)$ equipped with the
shortest-path distance $d_\graph$, the Vietoris--Rips filtration
generator yields the pair $(\simplicialcomplex, f_\textbf{V})$, where
$\simplicialcomplex$ is the set of all non-empty subsets $\sigma$ of
$\vertices$ such that $\diam(\sigma) \neq \infty$ and $f_{\textbf{V}}$
is the Vietoris--Rips filtration function given by
$f_{\textbf{V}}(\sigma) = \max_{u,v\in\sigma} d_\graph(u, v)$.
Notice that the Vietoris--Rips filtration generator is equivariant.
This is because graph isomorphisms preserve distances between vertices,
making the induced simplicial morphisms preserve diameters and
filtration values, and inducing an isomorphic simplicial complex
morphism.

Vietoris--Rips persistent homology cannot be directly compared to the WL
hierarchy, as no method is at least as expressive as the other.
For example, Vietoris--Rips persistence diagrams of dimension zero can
capture the number of connected components of graphs, while the \WL
hierarchy cannot always capture them. See~\cref{lem:Connected components
WL} for a proof of this fact. By contrast, the \WL test can
capture the difference between a path graph and a cycle graph both of
length~$3$, while Vietoris--Rips persistent homology cannot.
The fact that both methods are not comparable in terms of expressivity
suggests that persistent homology methods based on Vietoris--Rips
filtrations and WL/message passing methods are complimentary, since
many standard graph neural networks are exactly as expressive as
\WL~\autocite[Theorem~2]{Morris19a}.
Regarding filtrations of type~\labelcref{enum:clique_filt},
we do not expect cliques and high-dimensional persistent homology to be 
crucial in distinguishing pairs of non-isomorphic graphs. 
On the one hand, equivariant graph filtration generators of
type~\labelcref{enum:clique_filt} are already \emph{at least as
expressive} as \WL, thus they can distinguish almost all non-isomorphic
graph pairs~\autocite[Theorem~3.3]{Kiefer20a}.
On the other hand, to prove that persistent homology for filtrations of
type~\labelcref{enum:clique_filt} is at least as expressive as \kWL for
$k\geq 2$, we would need to distinguish non-isomorphic CFI pairs for all
$k\geq 2$. However, by definition of CFI graphs, they do not contain
cliques of size three or greater, see~\cref{lem:CFITrianglesGeneral}, 
meaning that most of the expressive
power of these filtrations can only arise from the values in the
original graph structure and low-dimensional persistent homology.
We would ideally want to extend \cref{thm:k-WL expressivity} to state
that persistent homology is \emph{strictly} more expressive than \kWL,
although this is not as straightforward as for the case $k=1$.
Currently, we can provide one such counterexample, described in
\cref{tab:Strongly-regular graphs}, consisting of the $4 \times 4$
rook's graph and the Shrikhande graph. With an appropriate filtration,
persistent homology can distinguish these two graphs
\emph{without} requiring more than vertices and edges, whereas \kWL[2]
is unable to distinguish them. We leave a general result for future work.

\section{Experiments}\label{sec:Experiments}

The previous sections discussed the theoretical properties of
filtrations. Here, we analyse their \emph{empirical} performance as a 
complement to the theoretical discussion to demonstrate the high expressivity of 
persistent homology in graph tasks. We
first analyse the \emph{expressivity} of five different well-known
filtrations by
letting them distinguish non-isomorphic graphs. This is followed by
experiments on graph property prediction. Please refer to
\cref{app:Expressivity Experiments,app:Graph Classification Experiments}
for additional expressivity and graph-classification experiments.

\paragraph{Experimental setup.}
%
We use different data sets containing strongly-regular graphs~\autocite{McKayGraphs},
minimal Cayley graphs, as well as benchmark data sets for graph-learning
tasks~\autocite[BREC]{Wang23a}.
In the following, we will use five different filtrations for each graph:
\begin{compactenum}
  \item A \emph{degree} filtration~(denoted by \textbf{D}), i.e.\ $v \mapsto \degree(v)$. The
  degree filtration is the most basic non-trivial filtration of
  a graph, showing nevertheless surprising empirical performance in
  graph classification tasks~\autocite{Hofer20, OBray21a, Rieck20c}.
  \item A filtration based on the eigenvalues of the \emph{undirected
    graph Laplacian}~(denoted by \textbf{L}), i.e.\ $v \mapsto \lambda_v$, where $\lambda_v$
  indicates the eigenvalue of the undirected graph Laplacian
  corresponding to vertex~$v$. The graph Laplacian is known to
  capture characteristic properties of a graph; in the context of
  persistent homology it is often used in the form of a \emph{heat
    kernel signature}~\autocite{Carriere20a}.
  \item A filtration based on the \emph{Ollivier--Ricci
    curvature}~\autocite{Ollivier07a}~(denoted by O) in the graph, setting $v \mapsto
    -1$ and $(u, v) \mapsto \kappa(u, v)$, with $\kappa$ denoting the
  Ollivier--Ricci curvature
  \begin{equation}
    \kappa(u, v) := 1 - \wasserstein_1\mleft(\mu_u^{\alpha}, \mu_v^{\alpha}\mright),
    \label{eq:Ollivier--Ricci curvature}
  \end{equation}
  where $\wasserstein_1$ denotes the first Wasserstein
  distance,\footnote{%
    This metric is also known as the \emph{Earth Mover's Distance}~\autocite{Levina01a}.
    The Wasserstein distance is a fundamental concept in optimal
    transport; the monograph by \textcite{Villani09} contains
    a comprehensive introduction to this topic.
  }
  and $\mu_u^{\alpha}, \mu_v^{\alpha}$ denote \emph{probability
  measures} based on a lazy random walk in the graph, i.e.\
  $\mu_u^{\alpha}(u) := \alpha$, indicating the probability of staying
  at the same vertex, $\mu_u^{\alpha}(v) :=  (1
  - \alpha)\nicefrac{1}{\degree(u)}$ for a neighbour~$v$ of~$u$, and
  $\mu_u^{\alpha}(\cdot) := 0$ otherwise.
  The probability measures in \cref{eq:Ollivier--Ricci curvature}, i.e.\
  $\mu_u^\alpha$, may be adjusted; recent work investigates the utility
  of this perspective~\autocite{Coupette23a, Southern23a}.
  We set $\alpha = 0$ for our subsequent experiments, thus obtaining
  a non-lazy random walk, and leave the investigation of the impact of other
  values for future work.
  \item A filtration based on the \emph{augmented Forman--Ricci
    curvature}~\autocite{Samal18a}~(denoted by \textbf{F}), where we
    again set $v \mapsto -1$ and $(u, v) \mapsto \mathcal{F}(u, v)$,
    with
    $\mathcal{F}(u,v) := 4-\degree(u)-\degree(v)+3\left|\neighbours(u)\cap
    \neighbours(v)\right|$.
  \item A \emph{Vietoris--Rips} filtration~(denoted by \textbf{V}) over
    the metric space defined by the shortest-path distance between its
    nodes~\autocite{Adams22a}.
\end{compactenum}
We exclude colouring filtrations, as used in~\cref{thm:k-WL
expressivity}, from our analysis due to their practical infeasibility.
For each input graph $\graph$, this approach would require generating
$k!$ filtrations and constructing simplicial complexes containing all
cliques up to dimension~$k$.
After picking a filtration~(except for the Vietoris--Rips one), we
expand the graph by filling in all \mbox{$(k+1)$-cliques}, with
filtration value for a clique $\sigma$ given recursively by the maximum
filtration value of its proper subcliques, i.e.\
$\sigma\mapsto\max_{\tau \subsetneq \sigma}f(\tau)$, and calculating
persistent homology up to dimension~$k$. Hence, for $k = 1$, we leave
the graph `as-is,' making use of connected components and cycles
only.\footnote{
  Notice the shift in dimension: a $k$-simplex has $k+1$
  vertices, meaning that persistent homology in dimension~$k$ contains
  information about \mbox{$(k+1)$}-cliques.
}
Our persistent homology calculations result in a set of persistence
diagrams for each graph, which we compare in a pairwise manner using
the bottleneck distance described by \cref{eq:Bottleneck distance}.
We consider two graphs to be different whenever the distance between
their persistence diagrams is $> \num{1e-8}$, i.e.\ above machine
precision.
This setup has the advantage that 
no additional classifier is
required
; when dealing with data sets of pairs of non-isomorphic
graphs, we may thus simple \emph{count} the number of non-zero
distance pairs, showing the utility of persistent homology as a powerful
baseline for graph expressivity analysis.
We omit comparisons between persistent homology and isolated filtration
values, as persistent homology is at least as expressive as filtration
values alone. This is a consequence of the \emph{pairing lemma}, which
ensures all filtration values, with multiplicities, are distributed as
births or deaths in persistence diagrams up to the dimension of the
simplicial complex. Hence, the filtration values can always be
reconstructed from the persistence diagrams.

\subsection{Strongly-Regular Graphs and Minimal Cayley Graphs}

\begin{table}[tbp]
  \sisetup{
    table-format    = 1.2,
    round-mode      = places,
    round-precision = 2,
    detect-all      = true,
    detect-weight   = true,
  }
  \centering
  \caption{Success rate~($\uparrow$) for distinguishing pairs of
    \emph{strongly-regular graphs} when using different
    filtrations at varying expansion levels
    of the graph~(denoted by~$k$). \kWL[2] cannot distinguish between
    any of these pairs.
  }%
  \label{tab:Strongly-regular graphs}
  \smallskip
  \begingroup
  \let\b=\bfseries
  \let\e=\itshape
  \setlength{\tabcolsep}{3pt}
  \begin{tabular}{lSSSSSSSSSSSSSSS}
    \toprule
    \multirow{5}{*}{\e Data} & \multicolumn{5}{c}{$k = 1$}           & \multicolumn{5}{c}{$k = 2$}                & \multicolumn{5}{c}{$k = 3$}                \\
    \cmidrule(lr){2-16}
                             & \multicolumn{15}{c}{\e Filtration}                                                                                              \\
    \cmidrule(lr){2-16}
                             & {D}    & {O}     & {F}  & {L}  & {V}  & {D}    & {O}    & {F}    & {L}    & {V}    & {D}    & {O}    & {F}    & {L}    & {V}    \\
    \midrule
    \texttt{16622}           & 0.00   & \b1.00  & 0.00 & 0.00 & 0.00 & \b1.00 & \b1.00 & \b1.00 & \b1.00 & \b1.00 & \b1.00 & \b1.00 & \b1.00 & \b1.00 & \b1.00 \\
    \texttt{251256}          & 0.00   &   0.00  & 0.00 & 0.00 & 0.00 &   0.00 &   0.00 &   0.00 &   0.00 &   0.00 & \b0.90 & \b0.90 & \b0.90 & \b0.90 & \b0.90 \\
    \texttt{261034}          & 0.00   &   0.00  & 0.00 & 0.00 & 0.00 &   0.20 &   0.20 &   0.20 &   0.76 &   0.20 &   0.93 &   0.93 &   0.93 & \b0.98 &   0.93 \\
    \texttt{281264}          & 0.00   &   0.83  & 0.00 & 0.00 & 0.00 &   0.00 & \b1.00 &   0.00 &   0.00 &   0.00 & \b1.00 & \b1.00 & \b1.00 & \b1.00 & \b1.00 \\
    \texttt{291467}          & 0.00   &   0.00  & 0.00 & 0.00 & 0.00 &   0.00 &   0.00 &   0.00 &   0.00 &   0.00 & \b0.77 & \b0.77 & \b0.77 & \b0.77 & \b0.77 \\
    \texttt{351668}          & 0.00   &   0.00  & 0.00 & 0.00 & 0.00 &   0.00 &   0.00 &   0.00 &   0.05 &   0.00 &   0.95 &   0.95 &   0.95 & \b0.99 &   0.95 \\
    \texttt{351899}          & 0.00   &   0.00  & 0.00 & 0.00 & 0.00 &   0.00 &   0.00 &   0.00 &   0.00 &   0.00 & \b0.81 & \b0.81 & \b0.81 & \b0.81 & \b0.81 \\
    \texttt{361446}          & 0.00   &   0.00  & 0.00 & 0.00 & 0.00 &   0.02 &   0.02 &   0.02 &   0.83 &   0.02 &   0.92 &   0.92 &   0.92 & \b0.99 &   0.92 \\
    \texttt{401224}          & 0.00   &   0.00  & 0.00 & 0.00 & 0.00 &   0.93 &   0.93 &   0.93 &   0.99 &   0.93 &   0.94 &   0.94 &   0.94 & \b0.99 &   0.94 \\
    \bottomrule
  \end{tabular}
  \endgroup
\end{table}

We start our investigation by analysing \emph{strongly-regular
graphs}, which are are known to be extremely
challenging to distinguish. \kWL[2], for instance, cannot distinguish
\emph{any} of these graphs~\autocite[Section~3.3]{Morris21a}.
\cref{tab:Strongly-regular graphs}
summarises the performance of our selected filtrations.
We first observe that for $k = 1$, i.e.\ for the original graph
without any cliques, few pairs of graphs can be distinguished
by the five filtrations.
Notably, a curvature-based filtration \emph{is} sufficient to distinguish the
two graphs in the \texttt{16622} data set, colloquially known
as the $4 \times 4$ rook's graph and the Shrikhande graph.
Distinguishing between these two graphs is usually said to require
knowledge about cliques~\autocite{Bodnar21a}, but it turns out that
  a suitable filtration is sufficient. However, the empirical
expressivity of curvature-based filtrations appears limited for $k
  = 1$, improving only for higher-order clique complexes. The Laplacian
filtration, by contrast, exhibits strong empirical performance for $k
  = 2$ on almost half of the data sets, increasing to near-perfect
performance for $k = 3$ in almost
all data sets. Vietoris--Rips and degree filtrations obtain the exact
same success rates for every~$k$ and data set, exhibiting the lowest
success rates for $k=1$ and $k=2$, and the same ones to the curvature
filtrations for $k=3$. It is clear that knowledge about higher-order
cliques helps in driving performance here. Notice that in contrast to
other algorithms~\autocite{Bodnar21a}, no additional embedding of the
graphs is required; we are comparing `raw' persistence diagrams
directly.

\begin{table}[tbp]
  \sisetup{
    table-format    = 1.2,
    round-mode      = places,
    round-precision = 2,
    detect-all      = true,
    detect-weight   = true,
  }
  \centering
  \caption{
    Success rate~($\uparrow$) for distinguishing pairs of \emph{minimal
    Cayley graphs} when using five different filtrations at varying
    expansion levels of the graph~(denoted by~$k$). Values for \WL are
    shown as a baseline.
  }
  \label{tab:Minimal Cayley graphs}
  \smallskip
  \begingroup
  \let\b=\bfseries
  \let\e=\itshape
  \setlength{\tabcolsep}{3pt}
  \begin{tabular}{lSSSSSSSSSSS}
    \toprule
    \multirow{4}{*}{\e Data} &        & \multicolumn{5}{c}{$k = 1$}     & \multicolumn{5}{c}{$k = 2$}                                                                                  \\
    \cmidrule(lr){3-12}
                          &        & \multicolumn{10}{c}{\e Filtration}                                                                                                                \\
    \cmidrule(lr){3-12}
                          & {$1$-WL} & {D}                               & {O}                           & {F}      & {L}      & {V}      & {D}      & {O}      & {F}        & {L}               & {V}      \\
    \midrule
    \texttt{cay12}        & 0.67 & 0.67                          & 0.71                      & 0.95 & 0.86 & 0.00 & 0.95 & 0.95 & 0.95 & \b1.00 & 0.95 \\
    \texttt{cay16}        & 0.83 & 0.83                          & 0.42                      & 0.83 & 0.58 & 0.00 & 0.83 & 0.92 & 0.83 & \b1.00 & 0.94 \\
    \texttt{cay20}        & 0.61 & 0.61                          & 0.46                      & 0.61 & 0.79 & 0.00 & 0.61 & 0.79 & 0.61 & \b1.00 & 0.89 \\
    \texttt{cay24}        & 0.65 & 0.65                          & 0.82                      & 0.86 & 0.98 & 0.00 & 0.83 & 0.93 & 0.86 & \b1.00 & 0.93 \\
    \texttt{cay32}        & 0.76 & 0.76                          & 0.81                      & 0.76 & 0.90 & 0.00 & 0.76 & 0.94 & 0.76 & \b1.00 & 0.90 \\
    \texttt{cay36}        & 0.69 & 0.69                          & 0.87                      & 0.84 & 0.99 & 0.00 & 0.84 & 0.95 & 0.84 & \b1.00 & 0.94 \\
    \texttt{cay60}        & 0.69 & 0.69                          & 0.90                      & 0.78 & 1.00 & 0.00 & 0.77 & 0.95 & 0.78 & \b1.00 & 0.97 \\
    \texttt{cay63}        & 0.49 & 0.49                          & 0.89                      & 0.73 & 0.88 & 0.00 & 0.73 & 0.93 & 0.73 & \b1.00 & 0.96 \\
    \bottomrule
  \end{tabular}
  \endgroup
\end{table}

As an additional class of complex graphs, we analyse \emph{minimal
  Cayley graphs}, i.e.\
Cayley graphs that encode a group with a minimal generating set. Minimal
Cayley graphs are still a topic of active research in graph theory, with
several conjectures yet to be proven~\autocite{Babai78a, Babai96a}.
Specifically, isomorphisms of Cayley graphs have been extensively 
studied~\autocite{CAIHENG1998109,LI2002301,morris2016isomorphismscayleygraphsnilpotent,Alspach1997}
and can be associated to isomorphisms of subsets of groups and interesting 
questions about their structure; see~\autocite[Section~3]{CAIHENG1998109}.
We
follow the same experimental setup as described above but also show
the performance of \WL, calculated via a subtree Weisfeiler--Leman
graph kernel~\autocite{Shervashidze11a}. 
\Cref{tab:Minimal Cayley graphs} shows the results. We observe that
the Laplacian filtration is trivially able to distinguish between all
these graphs for $k=2$ and has a strong performance for $k=1$; this is not
surprising since spectra of the graphs are known to be
characteristic~\autocite{Lovasz75a}.
The performance of the
curvature filtrations also points towards the utility of this
formulation in practice.
We also observe that Vietoris--Rips filtrations are the ones that most
benefit from the availability of higher-order information, with
a significant improvement in performance, going from a $0$\% success
rate for $k = 1$ to $> 90$\% for $k = 2$.

\begin{table}[tbp]
  \centering
  \caption{
    Success rate~($\uparrow$) for distinguishing pairs of instances of the
    \emph{BREC data set} when using different filtrations at varying
    expansion levels of the graph~(denoted $k$). Due to combinatorial
    constraints, we did not calculate the Vietoris--Rips filtration for $k=4$.
    Legend and number of graphs per category: \texttt{B}~(Basic, 60), \texttt{R}~(Regular, 100),
    \texttt{E}~(Extension, 100), \texttt{C}~(CFI, 100), \texttt{4, 20}~($4$-Vertex
    Condition), \texttt{D}~(Distance-Regular, 20) graphs, respectively
    and \texttt{A}~(average over full data set, 400 graphs).
  }
  \label{tab:BREC}
  \smallskip
  \sisetup{
    table-format    = 1.2,
    round-mode      = places,
    round-precision = 2,
    detect-all      = true,
    detect-weight   = true,
  }
  \begingroup
  \let\b=\bfseries
  \let\e=\itshape
  \setlength{\tabcolsep}{2pt}
  \resizebox{\linewidth}{!}{%
  \begin{tabular}{lSSSSSSSSSSSSSSSSSSS}
    \toprule
                        & \multicolumn{5}{c}{$k = 1$}               & \multicolumn{5}{c}{$k = 2$}               & \multicolumn{5}{c}{$k = 3$}                     & \multicolumn{4}{c}{$k = 4$}            \\
    \midrule\multirow{2.5}{*}{\e Data}
                        & \multicolumn{19}{c}{\e Filtration}            \\
    \cmidrule(lr){2-20}
                        & {D}   & {O}    & {F}   & {L}     & {V}    & {D}   & {O}     & {F}   & {L}     & {V}   & {D}     & {O}     & {F}     & {L}     & {V}     & {D}      & {O}     & {F}     & {L}     \\
    \midrule                                                                                                                                                                                                        
    \texttt{B}          & 0.033 & 0.933  & 0.867 & \b1.000 & 0.000  & 0.783 & \b1.000 & 0.983 & \b1.000 & 0.517 &   0.833 & \b1.000 & 0.983   & \b1.000 &   0.583 &   0.833  & \b1.000 &   0.983 & \b1.000 \\
    \texttt{R}          & 0.000 & 0.420  & 0.320 &   0.000 & 0.000  & 0.390 &   0.540 & 0.500 &   0.480 & 0.390 &   0.850 &   0.930 & 0.910   &   0.930 &   0.850 &   0.890  & \b0.970 &   0.950 & \b0.970 \\
    \texttt{E}          & 0.070 & 0.760  & 0.440 &   0.940 & 0.000  & 0.260 &   0.920 & 0.590 & \b1.000 & 0.110 &   0.290 &   0.920 & 0.590   & \b1.000 &   0.160 &   0.290  &   0.920 &   0.590 & \b1.000 \\
    \texttt{C}          & 0.030 & 0.030  & 0.030 & \b0.060 & 0.030  & 0.030 &   0.030 & 0.030 & \b0.060 & 0.030 &   0.030 &   0.030 & 0.030   & \b0.060 &   0.030 &   0.030  &   0.030 &   0.030 &   0.060 \\
    \texttt{4}          & 0.000 & 0.000  & 0.000 &   0.000 & 0.000  & 0.000 &   0.000 & 0.000 &   0.000 & 0.000 & \b1.000 & \b1.000 & \b1.000 & \b1.000 & \b1.000 & \b1.000  & \b1.000 & \b1.000 & \b1.000 \\
    \texttt{D}          & 0.000 & 0.000  & 0.000 & \b0.050 & 0.000  & 0.000 &   0.000 & 0.000 & \b0.050 & 0.000 &   0.000 &   0.000 &   0.000 & \b0.050 & \b0.050 &   0.000  &   0.000 &   0.000 &   0.050 \\
    \midrule
    \texttt{A}          & 0.030 & 0.443  & 0.328 &   0.403 & 0.008  & 0.287 &   0.522 & 0.427 &   0.537 & 0.210 &   0.468 &   0.670 &   0.580 &   0.700 &   0.400 &   0.477  &   0.680 &   0.590 & \b0.710 \\
    \bottomrule
  \end{tabular}
  }
  \endgroup
\end{table}

\subsection{BREC Data Set}\label{sec:BREC Data}

BREC~\autocite{Wang23a} is a novel graph expressivity data set focused on providing
a robust, challenging benchmark for graph isomorphism detection,
containing particularly difficult graph classes. The data set consists
of $400$ graphs, divided into $6$ different categories, \cref{tab:BREC}
provides an overview of them and shows our experimentally-observed success rates.
We observe that the Vietoris--Rips filtration is the worst performing
filtration for all tested $k$ values, followed by the degree filtration.
The most informative curvature-based filtration is the Ollivier--Ricci
one, which obtained higher or equal success rates than the Forman--Ricci
curvature filtration in all the subsets of data, with strictly higher
average success rates for all $k$ values. The most effective filtration
overall is the Laplacian filtration for $k=4$, surpassing almost all the
algorithms, including both graph neural networks and classical methods,
described in~\textcite[Table~2]{Wang23a}. The only exception was the
$N_2$ algorithm~\autocite{Paap24a} that obtained a success rate of 74.5\%,
compared to the 71\% obtained by the Laplacian filtration for $k=4$.
Given the fact that $N_2$ requires knowledge of the isomorphism class of
\emph{all} \mbox{$2$-hop}-induced subgraphs of a graph, the
computational complexity makes it infeasible to apply for many graph
sizes in practices~\autocite[Section~4.3]{Paap24a}, whereas the Laplacian filtration
remains computable.
Overall, these results experiments underscore
the high expressivity of persistent homology, making it a strong baseline for
graph-learning tasks.
Please refer to \cref{sec:Additional
Results for the BREC Data Set} for additional results.

\subsection{Predicting Graph Properties}\label{sec:Predicting Graph Properties}

\begin{table}[tbp]
  \sisetup{
    table-format    = 1.2,
    round-mode      = places,
    round-precision = 2,
    detect-all      = true,
    detect-weight   = true,
  }
  \centering
  \caption{
    Accuracy~($\uparrow$) when predicting the properties of graphs in the
    \texttt{ogbg-molhiv} molecular graph data set~\autocite{hu20a} using
    different filtrations at varying expansion levels of the graph~(denoted by~$k$). 
    Column R contains the average probability of successfully predicting
    a property at random over all possible values of
    the property in the data, with the probability of choosing a label
    being proportional to the number of graphs with that label.
  }
  \label{tab:Radii prediction experiments}
  \smallskip
  \begingroup
  \let\b=\bfseries
  \let\e=\itshape
  \setlength{\tabcolsep}{4pt}
  \begin{tabular}{lSSSSSSSSSSS}
    \toprule
    \multirow{5}{*}{\e Data} &  & \multicolumn{5}{c}{$k = 1$} & \multicolumn{5}{c}{$k = 2$} \\
    \cmidrule(lr){3-12}
     & & \multicolumn{10}{c}{\e Filtration} \\
    \cmidrule(lr){3-12}
     & {R} & {D} & {O} & {F} & {L} & {V} & {D} & {O} & {F} & {L} & {V} \\
    \midrule
    Diameter & 0.02 & 0.09 & \b0.11 & 0.05 & 0.08 & 0.07 & 0.10 & 0.08 & 0.06 & 0.09 & {-} \\
    Girth & 0.04 & 0.00 & 0.11 & 0.34 & 0.46 & \b0.48 & 0.14 & 0.21 & 0.33 & 0.45 & {-} \\
    Radius & 0.03 & 0.16 & \b0.21 & 0.06 & 0.15 & 0.14 & 0.20 & 0.18 & 0.10 & 0.17 & {-} \\
    \bottomrule
    \end{tabular}
  \endgroup
\end{table}

As our final expressivity experiments, we assess 
the capability of persistent homology to predict graph properties. Here,
we focus on the \emph{diameter}, the \emph{radius}, and the \emph{girth}.

\paragraph{Predicting the diameter of 
random graphs.}
We predict the diameter of Erdős--Rényi and Watts--Strogatz graphs.
For the Erdős--Rényi graphs, we generate $N = 100$ graphs with~$n = 100$ vertices 
and $p = 0.1$. This
edge probability corresponds to the critical connectivity
regime, for which closed-form solutions of the diameter distribution
are not readily available~\autocite{Hartmann18a}. We assess the
utility of persistent homology by specifying a regression
task;\footnote{%
  Following \textcite{Hartmann18a}, we take the diameter of an
  Erdős--Rényi graph, which might consist of different connected
  components, to be the largest diameter of all connected components
  of the graph.
} to
this end, we vectorise the persistence diagrams for each filtration
using \emph{Betti curves}~\autocite{OBray21a, Rieck20c}, a simple
curve-based topological representation.
While this
representation is technically a function, we represent it as
a histogram of~$10$ bins and train a \emph{ridge regression
  classifier} via leave-one-out cross-validation to predict the
diameter of each graph. We deliberately focus only on zero-dimensional
and one-dimensional persistent homology, i.e.\ we leave $k = 1$.
Using the \emph{mean absolute error}~(MAE) for evaluation, we find
that Ollivier--Ricci curvature performs best~(\num{0.057}), followed by
the Laplacian spectrum~(\num{0.061}), and the degree
filtration~(\num{0.065}).
We observe similar patterns when calculating $N = 100$ Watts--Strogatz graphs of
type~$(100, 5, 0.1)$, i.e.\ we keep the same number of vertices and
the same edge rewiring probability~$p$, but connect each node with
its~$5$ nearest neighbours in a ring neighbourhood. Using the same
classifier, we again find that Ollivier--Ricci curvature achieves the
lowest MAE~(\num{0.851}), followed by the degree
filtration~(\num{0.865}), and the Laplacian spectrum~(\num{1.072}).

\paragraph{Predicting graph properties of the \texttt{ogbg-molhiv}~\autocite{hu20a} data set.}
We predict the maximum radius and diameter among the radii and diameters
of the connected components of each graph as well as their girth.
See \cref{fig:ogbg-molhiv_radii} for a visualisation of the distribution
of these properties across all graphs in the data set.
We use a random forest regression model and \emph{persistence
images}~\autocite{Adams17a} as its input, computed from the persistence
diagrams of the graphs calculated as in the previous sections.
The model is trained on the usual train and test splits of the data set,
with the final prediction obtained by rounding to the nearest
integer.\footnote{%
  We predict a girth of $\infty$ in case the output value is larger than
  the number of nodes in the graph.
}
Overall, this is a challenging task, with $24$, $15$, and $5$ labels having
more than $100$ examples for the diameter, radius and girth, respectively.
Thus, the average probabilities of guessing the correct label by
random choice is $0.02$, $0.03$, and $0.04$.
However, we find that persistent homology, with a suitable filtration,
can predict the maximum radius, the maximum diameter, and the
girth of the graphs on unseen examples~(i.e.\ on the test data set)
in approximately $11$\%, $20\%$, and $48$\% of the cases, respectively,
exhibiting substantially-improved results over a random baseline.
For predicting radii and diameters, we find that the
Ollivier--Ricci curvature outperforms the other filtrations for $k=1$
but gets worse for $k=2$, where the degree filtration performs the best.
It is surprising that the Vietoris--Rips filtration is \emph{not} able to
predict the radii of the graphs in this data set accurately ($\approx
14$\% accuracy), being the only filtration that works with explicit
distances. However, this filtration proves most effective for predicting
the girth, having an accuracy of almost $50$\% as compared to the $4$\%
of the baseline. 
These results complement prior work on predicting
properties of \mbox{point clouds~\autocite{Turkes22a, Bubenik20a}},
showing that topology-based graph-learning approaches carry a large
degree of additional information about graphs
.

\section{Discussion}\label{sec:Discussion}

We discussed various aspects of the computation and provided evidence of
the advantageous properties of persistent homology in the context of
graph learning.
Our primary theoretical insight is that
persistent homology is \emph{at least as expressive} as
a corresponding \WL or \kWL test, in some cases surpassing their
discriminative power.
Experiments underscore these theoretical expressivity properties,
while also demonstrating that persistent homology is able to capture
additional properties of a graph.
In light of the performance
differences among the different filtration functions 
in our experiments, we suggest that future work should focus
on elucidating properties of classes of such functions, aiming to strike
a balance between expressivity and efficiency.
Another limitation involves the calculation \emph{per se}, which
requires costly clique-finding operations and is thus not scalable to
large, dense graphs.
If node features are present, alternative geometrical-topological
approaches could potentially be used~\autocite{Maggs24a, Roell24a,
Munch23a, Marsh23a, Turner14b}, necessitating additional research.

In the future, we would like to formally prove which classes of
filtrations make \mbox{$k$-dimensional} persistent homology strictly
more expressive than \kWL~(if any).
Follow-up research could also focus on identifying other
properties~(next to the diameter and girth) that can be captured by
persistent homology in graph learning. Such research has both
theoretical and empirical components; a first step would be
a formalisation of which substructures are captured by
models~\autocite{Bouritsas23a, Chen20a, Southern23a}.
Moreover, the success of the Laplacian filtration at the experimental
tasks may hint at new filtrations based on spectral graph theory that
provide a trade-off between utility and computational efficiency.
We find that this research direction is overlooked by the
computational topology research community, with most of the
expressivity/stability results focusing on describing the stability of
distance-based filtrations under perturbations~\autocite{Chazal14a,
  Chazal17a}, and few works focusing on graphs~\autocite{Bauer21a}.
Based on our experiments, we thus envision that persistent homology will
constitute a strong baseline for graph-learning applications.
As previous work shows, even topology-inspired approaches, making use
of concepts such as filtrations, can approximate the performance of
highly-parametrised models at a fraction of the computational
cost~\autocite{OBray21a}.
All insights obtained using such topological methods hint at the
overall utility of graph-structural information for graph learning
tasks, but it is not clear whether current graph benchmark data
actually exhibit such structures~\autocite{Palowitch22a}.
We thus hope that persistent homology and related techniques will also
find more applications in \emph{hybrid models}, which are able to
incorporate geometrical--topological information about graphs.
This is an emerging research topic of crucial relevance since there are now
numerous graph data sets that combine geometrical information~(node
coordinates) with topological information~\autocite{Joshi23a}.

Our theoretical analysis of the properties of persistent homology for
graph learning tasks show the potential and benefits of a topology-based
perspective. We are confident that additional computational topology
concepts will enrich and augment machine learning models, leading to
new insights about their theoretical and empirical capabilities.
This paper is but a first attempt at elucidating the theoretical
utility of computational topology in a graph learning context;
advancing the field will require many more insights.

\clearpage

\section*{Acknowledgements}

This paper was motivated in large parts by discussions with participants of
the BIRS 2022 Workshop on `Deep Exploration of Non-Euclidean Data with
Geometric and Topological Representation Learning.'
The authors are indebted to Dr.\ Leslie O'Bray for comments and
discussions that helped substantially improve the main arguments of 
the paper.
Moreover, the authors are also grateful for the stimulating discussions
with the anonymous reviewers, in particular reviewer \texttt{9rsi},  and
the area chair, who believed in the merits of this work.
Rubén Ballester was supported by the Ministry of Science, Innovation and Universities through  
projects PID2019-105093GB-I00, PID2020-117971GB-C22, and 
PID2022-136436NB-I00 and through the 
FPU contract FPU21/00968, and by the Departament de Recerca i Universitats de 
la Generalitat de Catalunya (2021 SGR 00697).
Bastian Rieck was partially supported by the Bavarian state government
with funds from the \emph{Hightech Agenda Bavaria}. This work has
received funding from the Swiss State Secretariat for Education,
Research, and Innovation~(SERI). The funders had no role in the
preparation of the manuscript or the decision to publish.

\printbibliography
\clearpage

\appendix

\startcontents
\printcontents{}{1}{{%
    \vskip10pt\hrule
    \large\textbf{Appendix~(Supplementary Materials)}\vskip3pt\hrule\vskip5pt}
}
\clearpage

\counterwithin*{figure}{part}
\stepcounter{part}
\renewcommand{\thefigure}{S.\arabic{figure}}

\counterwithin*{table}{part}
\stepcounter{part}
\renewcommand{\thetable}{S.\arabic{table}}

\section{Extended literature review}
  Topological Data Analysis has been used in a variety of applications in 
  machine learning, such as in the development of topological input features, 
  the analysis of learning algorithms, or the development of new,
  topology-aware models; see~\cite{surveymltda} for a general introduction 
  of TDA in machine learning and~\cite{ballester2024topologicaldataanalysisneural} 
  for a more specific review of TDA in neural networks.

  In the context of graph learning, the use of topological data analysis, 
  and particularly persistent homology has contributed many new insights and tools.
  In the context of architectures, persistent homology has been used as pooling 
  layers~\cite{topopoolinggraphs,ying2024boosting}, readout layers~\cite{pmlr-v198-zhang22b},
  and regular layers~\cite{Horn22a,pmlr-v235-verma24a,Ye23a}. In the context of expressivity and 
  graph neural network property analysis,~\cite{Horn22a} also proved that persistent homology 
  is at least as expressive as \WL and~\cite{Immonen23a} extended this result and our results 
  for persistent homology for vertex- and edge-based filtrations.

\section{Counting Connected Components}\label{sec:Counting Connected Components}
%
Since the main text deals with higher-order topological features, and
such features afford a substantially less intuitive grasp, we want to
briefly comment on how to obtain~$\betti{0}$, the number of connected
components. As with many problems in computer science, this procedure
turns out to be simple if we pick our data structures correctly. Here,
we need a \emph{union--find} data structure, also known as a disjoint
set forest.
This data structure is built on the vertices of a graph and affords
two operations, viz.\ \texttt{union}~(or \texttt{merge}) and \texttt{find}. The
\texttt{merge} operation assigns two vertices to the same connected
component, while the \texttt{find} operation returns the current
connected component of a vertex. Building such a data structure is
reasonably easy in programming languages like \texttt{Python}, which
offer \emph{associative arrays}.
\cref{alg:Connected components} shows one particular pseudo-code
implementation of a simple union--find data structure. The pseudo-code
assumes that all operations are changing objects `in place.' Notice
that the \texttt{find} operation is implemented implicitly via
a lookup in the \texttt{merge} function. A proper object-oriented
implementation of a union--find data structure should have these two
operations in its public interface.

\begin{algorithm}
  \caption{%
    Using associative arrays to find connected components
  }
  \label{alg:Connected components}
  \begin{algorithmic}[1]
    \Function{get\_connected\_components}{$\vertices, \edges$}
    \State $\texttt{UF} \gets \{\}$
    \For{$v \in \vertices$}
    \State $\texttt{UF}[v] \gets v$
    \EndFor
    \For{$e = (v, w)\in \edges$}
    \State \Call{merge}{$\texttt{UF}, v, w$}
    \EndFor
    \State\Return $\{ v \mid \texttt{UF}[v] = v\}$
    \EndFunction
    \medskip
    \Function{merge}{$\texttt{UF}, v, w$}
    \If{$\texttt{UF}[v] \neq \texttt{UF}[w]$}
    \State $\texttt{UF}[v] \gets w$
    \EndIf
    \EndFunction
  \end{algorithmic}
\end{algorithm}

\section{A Primer in Computational Topology}\label{sec:A Primer in Computational Topology}

With the understanding that our readers have different backgrounds,
the following section provides a primer of the most relevant concepts
in computational topology.

\subsection{Simplicial Homology}\label{sec:Simplicial Homology}

The Betti numbers of a graph are actually a special instance of a more
generic concept, known as \emph{simplicial homology}. We will see that
under this framework, the Betti numbers are the ranks of the zeroth
and first homology group, respectively.
Simplicial homology is not required in order to understand most of the
results of this paper, but an appreciation for some of the concepts
will be helpful in understanding connections to other concepts. We try
to provide a self-contained introduction to the most relevant concepts
and refer to the textbook by \textcite{Munkres84} for a more in-depth
exposition of these concepts. We start by introducing the central
object of algebraic topology---the \emph{simplicial complex}.\footnote{
  Technically, we will be working with \emph{abstract simplicial
    complexes}. A definition of a simplicial complex in terms of convex
  subsets is also possible, but this necessitates understanding
  certain nuances that are irrelevant to this paper.
}
\begin{definition}[Simplicial complex]
  A \emph{simplicial complex}~$\simplicialcomplex$ is a system of sets that
  is closed under the subset operation. Thus, for any $\sigma \in
    \simplicialcomplex$ and $\tau \subseteq \sigma$, we have $\tau \in
    \simplicialcomplex$.
  An element~$\sigma \in \simplicialcomplex$ with $|\sigma| = k + 1$
  is also referred to as a \mbox{$k$-simplex}. We also express this by
  writing $\dim \sigma = k$. Moreover, if $k$ is
  maximal among all the simplices of $\simplicialcomplex$, we say that
  the $\simplicialcomplex$ is a \mbox{$k$-dimensional} simplicial
  complex.
\end{definition}
Note that there is an unfortunate shift in dimensions: a \mbox{$k$-simplex}
has indeed $k+1$ elements. This convention makes sense when we relate
it to the concept of \emph{dimension}. A \mbox{$0$-simplex}, i.e.\ a point
or a vertex, should be assigned a dimension of~$0$. The reader should
thus mentally equate the dimension of a simplex with its dimension.
The text will aim to quell any confusion about such shifts.
The quintessential example of a simplicial complex is a graph~$\graph
  = (\vertices, \edges)$. Setting $\simplicialcomplex := \vertices \cup
  \edges$, we obtain a \mbox{$1$-dimensional} simplicial complex. We may
calculate additional types of simplicial complexes from a graph, for
instance by \emph{expanding} each \mbox{$(k+1)$}-clique into
a \mbox{$k$-simplex}~\autocite{Horak09a, Rieck18a}.

The simplicial complex on its own is only a set system; to perform
calculations with this type of data structure, we need to imbue it
with additional operations. One of the most common operations involves
defining homomorphisms between the subsets of a simplicial
complex~$\simplicialcomplex$.
\begin{definition}[Chain group of a simplicial complex]
  Given a simplicial complex~$\simplicialcomplex$, the
  vector space generated over $\mathds{Z}_2$ coefficients whose
  elements are the $k$-simplices of~$\simplicialcomplex$ is called the
  \emph{$k$th chain group}, denoted by $\chaingroup{k}(\simplicialcomplex)$.
  The elements of a chain group are also
  referred to as \emph{simplicial chains}.
\end{definition}
Elements of the chain group are thus sums of simplices of
a compatible dimension. For instance, we may write the sum of all
edges of a graph to obtain a valid simplicial chain. Operating over
$\mathds{Z}_2$ coefficients means that $\sigma + \sigma = 0$, the
empty chain, for all $\sigma \in \simplicialcomplex$.\footnote{%
  Readers familiar with algebraic topology will recognise
  $\mathds{Z}_2$ as a deliberate choice of \emph{coefficient field}
  for the subsequent calculations. Other choices are possible, but the
  computational topology community predominantly uses $\mathds{Z}_2$
  coefficients in practice, with very few
  exceptions~\autocite{Gardner22a}.
  However, all the proofs and concepts introduced in this paper apply,
  \emph{mutatis mutandis}, for other coefficient sets as well.
}
Simplicial chains permit us to define homomorphisms
between chain groups, which will ultimately permit us to treat
topological questions with tools of linear algebra.
\begin{definition}[Boundary homomorphism]
  Given $\sigma = (v_0,\dots,v_k) \in \simplicialcomplex$, we define
  the \emph{$k$th boundary homomorphism}
  $\boundary{k}\colon\chaingroup{k}(\simplicialcomplex)\to\chaingroup{k-1}(\simplicialcomplex)$
  as
  \begin{equation}
    \boundary{k}(\sigma) := \sum_{i=0}^{k}(v_0,\dots, v_{i-1},v_{i+1},\dots, v_k),
    \label{eq:Degree filtration example}
  \end{equation}
  i.e.\ a sum of simplices with the $i$th entry---vertex---of the
  simplex missing, respectively.
\end{definition}

\begin{figure}
  \centering
  \begin{minipage}{0.25\textwidth}
    \begin{tikzpicture}[dot/.style={circle, fill, inner sep=1pt}]
      \node [label=below:$a$,dot] (a) at (0,0) {};
      \node [label=below:$b$,dot] (b) at (2,1) {};
      \node [label=above:$c$,dot] (c) at (1,2) {};

      \draw (a) -- (b);
      \draw (a) -- (c);
      \draw (b) -- (c);
    \end{tikzpicture}
  \end{minipage}
  \begin{minipage}{0.70\textwidth}
    \small
    The triangle is a simple simplicial complex, consisting of one
    $2$-simplex, three $1$-simplices and three $0$-simplices,
    respectively.
    The boundary of the $2$-simplex is non-zero: we have $\boundary{2}\{a,b,c\} = \{b,c\} + \{a,c\}
      + \{a,b\}$.
    The set of edges, on the other hand, does not have a boundary, i.e.\ $\boundary{1}\left(\{b,c\}
      + \{a,c\} + \{a,b\}\right) = \{c\} + \{b\} + \{c\} + \{a\} + \{b\} + \{a\} = 0$, because the
    simplices cancel each other out.
  \end{minipage}
  \caption[An example boundary homomorphism calculation]{%
    Calculating the boundaries of a $2$-simplex and the boundary of
    a simplicial chain consisting of $1$-simplices.
    Notice that the boundary of a boundary is always zero. This is
    a fundamental property of persistent homology.
    The figure is slightly adapted from \textcite{Rieck17d}.
  }
  \label{fig:Boundary operator example}
\end{figure}
%
It is sufficient to define $\boundary{k}$ on individual
simplices; since it is a homomorphism, it extends to arbitrary
simplicial chains. \cref{fig:Boundary operator example} shows an
example of this calculation.
The boundary operator already assigns some algebraic structure
to~$\simplicialcomplex$, but it turns out that we can use it to assign
a set of groups to the simplicial complex.
\begin{definition}[Homology group]
  We define the \emph{$k$th homology group} of a simplicial
  complex~$\simplicialcomplex$ as
  \begin{equation}
    \homologygroup{k}(\simplicialcomplex) := \ker\boundary{k} / \im\boundary{k+1},
  \end{equation}
  i.e.\ a \emph{quotient} group that we obtain from the two subgroups
  $\ker\boundary{k}$ and $\im\boundary{k+1}$ of $\chaingroup{k}$.
\end{definition}
The $k$th homology group of a simplicial complex contains its
\mbox{$k$-dimensional} topological features in the form of
\emph{equivalence classes} of simplicial chains, also known as
\emph{homology classes}.
This rather abstract definition is best understood by an additional
simplification step that involves calculating the \emph{rank} of
a homology group.
\begin{definition}[Betti number]
  The rank of the $k$th homology
  group~$\homologygroup{k}(\simplicialcomplex)$ is known as the
  \emph{$k$th Betti number}, denoted by $\betti{k}$.
\end{definition}
Despite the rank being a rather coarse summary, Betti numbers turn out
to be of immense utility in comparing different simplicial complexes.
We may even reproduce 
the equation for the cyclomatic number, $\betti{1} = m + \betti{0} - n$,
by noting that the
\emph{Euler characteristic}~$\chi(\simplicialcomplex) := \sum_i
  (-1)^i\ \mleft|\mleft\{ \sigma \mid \dim \sigma = i \mright\}\mright|$ can
also be expressed as a sum of alternating Betti numbers, i.e.\
$\chi(\simplicialcomplex) := \sum_i (-1)^i \betti{i}$. For a proof of
this surprising fact, see e.g.\ \textcite[p.\ 124]{Munkres84}.
Using this equivalence, we see that we can calculate $\betti{1}$ by
reshuffling some of the terms, thus also explaining why
the equation exhibits alternating signs.

At this point, we have introduced a large amount of algebraic
machinery. Changing our focus back to graphs, we may reap some
advantageous properties by noting that homology groups are somewhat
preserved under graph isomorphism.\footnote{%
  The reader well-versed in algebraic topology may be aware of this
  property directly, but we find it useful to mention this fact
  briefly.
}

\begin{lemma}
  \label{lem:Graph homology isomorphism}
  Let $\graph, \graph'$ be two isomorphic graphs.
  Then the homology groups of $\graph$ and $\graph'$ are isomorphic,
  i.e.\ $\homologygroup{k}(\graph) \simeq \homologygroup{k}(\graph')$
  for all~$k\geq 0$.
  \label{lem:Homology isomorphism}
\end{lemma}
\Cref{lem:Graph homology isomorphism} is a direct consequence of 
the \emph{functoriality} of homology. Functoriality of homology implies that,
given any map $f\colon\graph\to\graph'$, 
there is an induced map $\homologygroup{k}(f)\colon\homologygroup{p}(\graph)\to\homologygroup{p}(\graph')$ 
between homology groups for all $p\geq 0$,
and that, given two maps $f\colon\graph\to\graph'$ and $g\colon\graph'\to\graph''$,
we have $\homologygroup{k}(g\circ f) = \homologygroup{k}(g)\circ \homologygroup{k}(f)$. 
In particular, the identity map
$\id\colon\graph\to\graph$ induces the identity map on the homology groups of $G$, making 
that an isomorphism between $\graph$ and $\graph'$ induces an isomorphism 
between their homology groups. Thus, the proof of the lemma is straightforward.

\begin{proof}
  Let $k\geq 0$ and let $\iso\colon\graph\to\graph'$ be an isomorphism between 
  $\graph$ and $\graph'$. By \emph{functoriality} of homology, 
  $\homologygroup{k}(\iso)$ is an isomorphism between
  $\homologygroup{k}(\graph)$ and $\homologygroup{k}(\graph')$.  
\end{proof}

As a direct corollary, the Betti numbers of~$\graph$ and~$\graph'$
do not change, and in fact, a similar property holds for isomorphic
simplicial complexes.

\begin{corollary}
  The Betti numbers of isomorphic graphs are equal, i.e.\
  $\betti{p}(\graph) = \betti{p}(\graph')$ for all~$p$.
  \label{cor:Betti isomorphism}
\end{corollary}

This may be seen as a hint about the popularity of simplicial homology
in algebraic topology: the framework leads directly to characteristic
descriptions that remain invariant under~(graph) isomorphism.

\subsection{Persistent Homology}

Because of their conceptual simplicity---their calculation in low
dimensions only involves knowledge about the connected components of
a graph---Betti numbers are somewhat limited in their expressivity.
Taking any graph~$\graph = (\vertices, \edges)$, even the addition
of a single edge to~$\graph$ will change its Betti numbers, either by
merging two connected components~(thus decreasing $\betti{0}$) or by
creating an additional cycle~(thus increasing $\betti{1}$).
This is a direct consequence of the definition of Euler's formula, 
which effectively states that the insertion of a new edge~$e = (u, v)$ with
$u, v \in \vertices$ either causes~$\betti{1}$ to increase by~$1$
because $m$ changes, or remain the same in case the number of
connected components~$\betti{0}$ changes. However, a single
edge may only merge two connected components into one, so
$\betti{0}$ may also at most decrease by~$1$. This indicates that
Betti numbers are too coarse to be practically useful in large-scale
graph analysis. It is possible to turn Betti numbers into
a \emph{multi-scale descriptor} of a graph. This requires certain
modifications to the previously-introduced concepts. Similar to
\cref{sec:Simplicial Homology}, we will formulate everything in terms
of simplicial complexes, again pointing out that this results in
a more general description.

\begin{definition}[Filtration]
  Given a simplicial complex~$\simplicialcomplex$, we call a sequence
  of simplicial complexes \emph{filtration} if it affords a nesting
  property of the form
  \begin{equation}
    \emptyset = \simplicialcomplex_0 \subseteq \simplicialcomplex_1
    \subseteq \dots \subseteq \simplicialcomplex_{m-1} \subseteq
    \simplicialcomplex_m = \simplicialcomplex.
    \label{eq:Filtration}
  \end{equation}
  Since each element of this sequence is a valid simplicial complex,
  we can also think of this construction as `growing'
  $\simplicialcomplex$ by adding simplices one after the other.
  \label{def:Filtration}
\end{definition}
Filtrations arise naturally when building simplicial complexes from
point cloud data, but even in the context of graphs, we can imagine
filtrations as \emph{filtering} a graph based on some type of data,
or function, assigned to its vertices. For instance, we may build a
filtration of a graph based on the degree of its vertices, defining
$\simplicialcomplex_i$ to be the subgraph consisting of all vertices
satisfying the degree condition, plus all edges whose endpoints
satisfy it, i.e.\
\begin{equation}
  \simplicialcomplex_i := \{v \in \vertices \mid \degree(v) \leq i\}
  \cup \mleft\{\{u, v\} \in \edges \mid \degree(u) \leq i \wedge
  \degree(v) \leq i\mright\}.
\end{equation}
Notice that we could also express the second condition more compactly
by assigning to each \mbox{$1$-simplex}~(each edge) the \emph{maximum}
of the weight of its vertices. This construction is sometimes also
referred to as a \emph{lower-star filtration} since it extends
a node-level function to higher-order simplices~\autocite{Dey22a}.
Not all filtrations have to be defined on the vertex level; as long as
each edge in the filtration is preceded by its vertices, we can also
build valid filtrations from functions that are primarily defined on
edges.\footnote{%
  In \cref{sec:Experiments}, we will make use of a filtration defined
  on edge-based curvature values.
}

Setting aside further discussions about how to obtain filtrations for
now, filtrations are compatible with the simplicial homology framework
introduced above. The boundary operators $\boundary{}(\cdot)$,
together with the inclusion homomorphism between consecutive
simplicial complexes, induce a homomorphism between corresponding homology
groups of any filtration of~$m$ simplicial complexes.
Given $i \leq j$, we write $\iota^{i,j} \colon \homologygroup{k}(\simplicialcomplex_i) \to \homologygroup{k}(\simplicialcomplex_j)$ to denote this homomorphism.
This construction yields a sequence of homology groups
\begin{equation}
  0 = \homologygroup{k}(\simplicialcomplex_0) \xrightarrow{\iota_k^{0,1}} \homologygroup{k}(\simplicialcomplex_1) \xrightarrow{\iota_k^{1,2}}  \dots \xrightarrow{\iota_k^{m-2,m-1}} \homologygroup{k}(\simplicialcomplex_{m-1}) \xrightarrow{\iota_k^{m-1,m}} \homologygroup{k}(\simplicialcomplex_m) = \homologygroup{k}(\simplicialcomplex)
\end{equation}
for every dimension~$k$. We then define the \emph{$k$th persistent
  homology group} as
\begin{equation}
  \persistenthomologygroup{d}{i,j} :=
  \ker\boundary{k}(\simplicialcomplex_i) / \mleft(
  \im\boundary{k+1}\mleft(\simplicialcomplex_j\mright)\cap\ker\boundary{k}\mleft(\simplicialcomplex_i\mright)\mright),
\end{equation}
containing all topological features---homology classes---created in
$\simplicialcomplex_i$ that still exist in $\simplicialcomplex_j$.
Following the definition of the ordinary Betti number, we then define
the  \emph{$k$th persistent Betti number} to be the rank of this
group, leading to $\persistentbetti{k}{i,j} := \rank \persistenthomologygroup{k}{i,j}$.
It should be noted that this type of construction makes use of
numerous deep mathematical concepts; for the sake of an expository
article, we heavily summarise and compress everything to the most
pertinent results.

The appeal of persistent homology can be seen when we start to make
use of the features it captures. If we assume that our filtration is
associated with a set of values~$a_0 \leq a_1 \leq \dots \leq a_{m-1}
  \leq a_m$, such as the function values on the vertices, we can
calculate \emph{persistence diagrams}, i.e.\ simple topological
feature descriptors.
\begin{definition}[Persistence diagram]
  The \emph{$k$-dimensional persistence diagram} of a filtration is
  the multiset of points in $\reals^2$ that, for each pair $i, j$ with
  $i \leq j$, stores the tuple $(a_i, a_j)$ with multiplicity
  \begin{equation}
    \mu_{i,j}^{(k)} := \mleft( \persistentbetti{k}{i,j-1} - \persistentbetti{k}{i,j} \mright) - \mleft( \persistentbetti{k}{i-1,j-1} - \persistentbetti{k}{i-1,j}
    \mright).
  \end{equation}
  We will also assign a multiplicity to \emph{essential topological features} of
  the simplicial complex, setting
  \begin{equation}
    \mu_{i,\infty}^{(k)} := \persistentbetti{k}{i,m} - \persistentbetti{k}{i-1,m},
  \end{equation}
  which denotes all features that are still present in the last
  simplicial complex of the filtration, i.e.\ in $\simplicialcomplex_m
    = \simplicialcomplex$. The persistence diagram thus contain all the
  information carried by Betti numbers.
\end{definition}
Persistence diagrams summarise the topological activity of
a filtration. Given a persistence diagram~$\diagram$, for any tuple
$(a_i, a_j)$, the quantity $| a_j - a_i|$ is called the
\emph{persistence} of the respective topological feature. Persistence
indicates whether a feature, created in some simplicial complex during
the filtration, is prominent or not. This notion was originally
introduced by \textcite{Edelsbrunner02} to analyse the relevance of
topological features of a distance function; the terminology is
supposed to indicate the prominence of a topological feature.
Features with a high persistence are commonly taken to be relevant, whereas
features with a low persistence used to be considered as noise; this
assumption is changing as, depending on the filtration, low
persistence may also just imply `low reliability.'~\autocite{Bendich16a}

Persistence diagrams can be endowed with different metrics and
kernels~\autocite{Kwitt15a, Reininghaus15a}, and it is known that the
space of persistence diagrams is an Alexandrov space with curvature
bounded from below~\autocite{Turner14a}. The most common metric to
compare two persistence diagrams is the \emph{bottleneck distance},
defined as
\begin{equation}
  \db\mleft(\diagram, \diagram'\mright) := \inf_{\eta\colon \diagram \to \diagram'}\sup_{x\in \diagram}\|x-\eta(x)\|_\infty,
  \label{eq:Bottleneck distance}
\end{equation}
where $\eta$ ranges over all bijections between the two persistence
diagrams. \cref{eq:Bottleneck distance} is solved using optimal
transport; different cardinalities are handled by permitting points in
one diagram to be transported to their corresponding projection on the
diagonal. Another metric is the 
\emph{Wasserstein distance}~\autocite[Theorem~7.3]{wassersteinDistance}, 
in which
the $\sup$ calculation is replaced by a weighted sum over all
distances between points in a diagram.

Stability properties of filtrations are a crucial aspect of research in
computational topology~\autocite{Skraba20a}.
Given two filtrations~$f, g$ of the same simplicial complex, a seminal
result by \textcite{Cohen-Steiner07} proves the following bound:
\begin{theorem}[Bottleneck stability]
  Let $f, g$ refer to filtrations of a simplicial
  complex~$\simplicialcomplex$, and let $\diagram_f$ and $\diagram_g$
  denote their respective persistence diagrams. The \emph{bottleneck
  distance} distance is upper-bounded by
  $
  \db\mleft(\diagram_f, \diagram_g\mright) \leq \|f-g\|_\infty
  $,
  where $\|\cdot\|_\infty$ refers to the supremum norm.
\end{theorem}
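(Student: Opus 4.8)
The plan is to derive the bound from an \emph{interleaving} of the two sublevel-set filtrations and then turn that interleaving into a matching of the two diagrams. Write $\epsilon := \|f-g\|_\infty$; unravelling the definition, this says $|f(\sigma) - g(\sigma)| \leq \epsilon$ for every simplex $\sigma$ of $\graph$ (for a vertex-based filtration extended to edges via the maximum rule, a bound on the vertices propagates verbatim to the edges). First I would observe that this pointwise estimate forces the sublevel sets to nest across the two filtrations: for every threshold $t$ we have $f^{-1}(-\infty, t] \subseteq g^{-1}(-\infty, t+\epsilon]$ and, symmetrically, $g^{-1}(-\infty, t] \subseteq f^{-1}(-\infty, t+\epsilon]$. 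These inclusions commute with the internal inclusions of each filtration, so applying $\homologygroup{k}(\cdot)$ yields, for each dimension $k$, two families of linear maps that shift the filtration value by $\epsilon$ and whose two composites recover the comparison maps of the respective filtration across a window of width $2\epsilon$. This is precisely an $\epsilon$-interleaving of the persistence modules of $f$ and $g$.

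Second, I would read off the consequences for persistent Betti numbers. The interleaving maps give, for all $a \leq b$, inequalities of the form $\persistentbetti{k}{a,b} \leq \persistentbetti{k}{a-\epsilon, b+\epsilon}$, where the left-hand side is computed from $f$ and the right-hand side from $g$, together with the symmetric statement, since a class alive on $[a,b]$ in one filtration maps to a class alive on $[a-\epsilon, b+\epsilon]$ in the other. Feeding these into the \emph{Box Lemma} of Cohen-Steiner, Edelsbrunner, and Harer---the inclusion--exclusion identity expressing the number of diagram points inside an upper-left rectangle $[p,q] \times [r,s]$ (with $q < r$) as the alternating sum $\persistentbetti{k}{q,r} - \persistentbetti{k}{p,r} - \persistentbetti{k}{q,s} + \persistentbetti{k}{p,s}$ of persistent Betti numbers (with suitable open/closed conventions), which is exactly the multi-point generalisation of the formula defining $\mu_{i,j}^{(k)}$ above---I obtain that, for every rectangle, the number of points of $\diagram_f$ it contains is at most the number of points of $\diagram_g$ in its $\epsilon$-enlargement, and vice versa.

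Third, I would assemble the bijection $\eta$. The box inequalities say that the sub-multisets of $\diagram_f$ and $\diagram_g$ consisting of points of persistence greater than $2\epsilon$ satisfy a Hall-type marriage condition, so there is a bijection between them pairing each point with one at $\|\cdot\|_\infty$-distance at most $\epsilon$; every remaining point lies within $\|\cdot\|_\infty$-distance $\epsilon$ of the diagonal, hence can be matched to its diagonal projection. The resulting $\eta$ is an admissible competitor in the infimum defining $\db$, and it witnesses $\db(\diagram_f, \diagram_g) \leq \epsilon = \|f-g\|_\infty$. Because $\graph$ is a finite complex, every diagram here is a finite multiset---and only $k = 0$ and $k = 1$ actually occur---so this last step is purely combinatorial and no limiting process is needed.

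The main obstacle is this third step: turning the whole family of box inequalities into a single bijection that respects all of them simultaneously. This is the combinatorial core of the stability theorem, and the delicate points are the uniform treatment of points arbitrarily close to the diagonal and the reconciliation of the differing cardinalities of $\diagram_f$ and $\diagram_g$. In the original proof this is handled by first establishing the estimate for a linear interpolation $f_t := (1-t)f + t g$ over short parameter sub-intervals---where the diagram points move continuously and cannot jump across the diagonal---and then composing the local estimates via the triangle inequality for $\db$; in the finite graph setting one can argue more directly, but in either route the careful bookkeeping near the diagonal, together with the verification that passing from local to global preserves the sharp constant $\|f-g\|_\infty$ rather than some multiple of it, is where the real work lies.
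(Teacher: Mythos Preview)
The paper does not actually prove this theorem: it is stated as a known result and attributed to \textcite{Cohen-Steiner07}, with no argument given. There is therefore nothing in the paper to compare your proposal against beyond the citation itself.

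That said, your sketch is a faithful outline of the Cohen-Steiner--Edelsbrunner--Harer proof the paper cites, phrased partly in the later interleaving language. The three-step structure---sublevel-set nesting giving an $\epsilon$-interleaving, the Box Lemma converting persistent-Betti-number inequalities into rectangle counts, and the matching construction via interpolation $f_t = (1-t)f + tg$---is exactly their strategy, and you correctly flag the delicate part as the passage from local box inequalities to a global bijection with the sharp constant. One small terminological point: what the paper calls the ``Hausdorff distance between the two functions'' is really just the sup-norm $\max_\sigma |f(\sigma)-g(\sigma)|$, which is how you have read it; your interpretation is the correct one for the theorem to hold.
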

An extension of this theorem, with $\db$ being replaced by the
Wasserstein distance, shows that persistence diagrams are also
stable in the Lipschitz sense~\autocite{Cohen-Steiner10}. These
stability properties are remarkable because they link a topological
quantity with a geometrical one, thus underscoring how persistent
homology itself incorporates both geometrical and topological aspects
of input data.


\subsection{Computational complexity of persistent homology}
In this section, we briefly discuss the computational complexity of computing 
persistence diagrams. For a more detailed discussion, 
we refer the reader 
to~\autocite{Otter17a}.

Recall that persistence diagrams are obtained by first constructing a filtration
of a simplicial complex. Then, persistence diagrams are computed from the homology 
groups of the simplicial complexes and the induced maps from inclusion between them. 
Therefore, the computational complexity of computing persistence diagrams is the sum 
of the complexity of building the filtration, and the complexity of the algorithm 
constructing the persistence diagrams. 

Depending on the filtration type, the former complexity can greatly vary. For instance,
the complexity of building a filtration based on the degree of vertices is quadratic in
the number of vertices using the adjacency matrix if we do not add simplices of higher
dimension than edges. On the other hand, the complexity of building a Vietoris--Rips 
filtration is high in general, having $\mathcal{O}(n^{k+1})$ simplices up to dimension $k$ 
for a set of $n$ vertices, leading to research on more efficient filtrations similar to the 
original Vietoris--Rips one~\autocite{Sheehy2013}. For the filtration of 
\Cref{thm:k-WL expressivity}, computing 
the filtration is equivalent to compute the $k$-WL colouring of the graph, which can be 
done in $\mathcal{O}(n^{k+1}\log n)$ where $n$ is the number of 
vertices~\autocite[Section~1]{lichter2024computationalcomplexityweisfeilerlemandimension}.

For the latter, several algorithms exist 
depending on the filtration type and the homology coefficients used. A notable example is 
Ripser~\autocite{Ripser}, which computes persistence diagrams for Vietoris--Rips filtrations 
in an efficient way. Using the standard algorithm, the worst case complexity is cubic 
in the number of simplices $m$, being this a sharp bound. For dimension zero, alternatives 
based on the connection between the single linkage clustering algorithm 
and zero-dimensional persistence 
diagrams~\autocite[Claim~5.1]{single_linkage} allows computation in $\mathcal{O}(n^2)$.


\section{\WL and Persistent Homology}
\label{sec:WL and Persistent Homology}
Formally, \WL proceeds by iteratively calculating a node colouring
function~$C^{(1)}_i \colon \vertices \to \naturals$. The output of
this function depends on the neighbours of a given node. For
a vertex~$v$ at iteration~$i > 0$, we have
\begin{equation}
  C^{(1)}_i(v) := \texttt{RELABEL}\mleft(\mleft( C^{(1)}_{i-1}(v), \multiset{C^{(1)}_i(u) \mid u \in \neighbours(v)}\mright) \mright),
  \label{eq:1-WL}
\end{equation}
where \texttt{RELABEL} refers to an injective function that maps the
tuple of colours to a unique colour, i.e.\ a unique number 
and $\neighbours(v)$ refers to the neighbors of $v$, i.e.\ its 
adjacent vertices. The
algorithm is initialised by either using existing labels or the degree
of vertices.\footnote{%
  Note that \cref{eq:1-WL} does not recognise an ordering of labels.
  Initialising \WL with a constant value thus leads to the \emph{same}
  colouring---up to renaming---after the first iteration.
}
After a finite number of steps, the colour assignments generated using
\cref{eq:1-WL} stabilise. If two graphs give rise to different colour
sequences, the graphs are guaranteed to be non-isomorphic. The \WL
test is computationally easy and constitutes an
upper bound for the expressivity of many graph neural network~(GNN)
architectures~\autocite{Morris19a, Xu19a}. In other words, if \WL
cannot distinguish two non-isomorphic graphs, GNNs will also not be
able to distinguish them.
As stated in~\cref{sec:WL}, it is already a known result that 
\emph{any} \WL colouring can be reproduced by
creating a special filtration.
The implication is that persistent homology is \emph{at least as
expressive} as \WL because there is a filtration that distinguishes all
the graphs \WL can distinguish.
In fact,~\cref{lem:Connected components WL} demonstrates that the \WL
algorithm fails to detect even the most fundamental topological property
of graphs: the number of connected components. This limitation stands in
contrast to the capabilities of (persistent) homology, which can readily
capture such basic topological features.
\cref{sec:Topology and WL} shows
examples of graphs that can be distinguished based on their topological
features but not by the \WL algorithm. These examples demonstrate that
a topological perspective is strictly more expressive than \WL
.
%

\section{Theorems \& Proofs}
\label{sec:Proofs}

For the convenience of the reader, we restate all results again before
providing their proof.

When working with filtrations in the subsequent proofs, it would be
ideal to have filtrations that satisfy \emph{injectivity} on the level
of vertices, i.e.\ $f(v) \neq f(v')$ if $v \neq v'$. Such injective
filtrations have the advantage of permitting gradient-based optimisation
schemes~\cite{Hofer20}. The following lemma, first proved in
\textcite{Horn22a}, demonstrates that injectivity is not a strict
requirement, though, as it is always possible to find an injective
filtration function that is arbitrarily close~(in the Hausdorff sense)
to a non-injective filtration function.

\begin{lemma}
  For all $\epsilon > 0$ and a filtration function~$f$ defined on the
  vertices, i.e.\ $f\colon \vertices \to \reals^d$, there is an
  injective function $\tilde{f} \colon \vertices \to \reals^d$
  such that $\|f - \tilde{f}\|_\infty < \epsilon$.
  \label{lem:Injective function}
\end{lemma}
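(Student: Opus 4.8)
The plan is to construct $\tilde f$ by perturbing $f$ on each vertex by an amount small enough to stay within an $\epsilon$-ball but chosen generically so that no two perturbed values collide. Since $\vertices$ is finite, write $\vertices = \{v_1, \dots, v_n\}$ and process the vertices one at a time. Having already fixed $\tilde f(v_1), \dots, \tilde f(v_{i-1})$, I would pick $\tilde f(v_i)$ in the open ball $\ball{\epsilon}{f(v_i)} \subseteq \reals^d$ subject to avoiding the finitely many "bad" values that would create a collision. The key observation is that for each already-assigned vertex $v_j$ with $j < i$, the set of points of $\ball{\epsilon}{f(v_i)}$ that equal $\tilde f(v_j)$ is at most a single point, hence has measure zero (or, when $d = 1$, is a finite set of points). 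A finite union of measure-zero sets still has measure zero, so the ball minus these forbidden points is nonempty—in fact dense—and any choice from it works. After $n$ steps we obtain an injective $\tilde f$ with $\|\tilde f(v_i) - f(v_i)\| < \epsilon$ for every $i$, so $\|f - \tilde f\|_\infty < \epsilon$ as required.

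A slightly slicker alternative, which I would mention, is to take $\tilde f(v_i) := f(v_i) + \delta \, w_i$ where $w_1, \dots, w_n \in \reals^d$ are fixed vectors in "general position" (e.g.\ $w_i = (i, i^2, \dots, i^d)$, or simply $n$ points no two of which differ by a vector of the form $f(v_k) - f(v_\ell)$) and $\delta > 0$ is chosen small enough that $\delta \max_i \|w_i\| < \epsilon$. One then checks that $\tilde f(v_i) = \tilde f(v_j)$ would force $f(v_i) - f(v_j) = \delta(w_j - w_i)$, and for all but finitely many $\delta$ this fails whenever $v_i \neq v_j$; picking such a $\delta$ below the norm bound finishes the argument. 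This makes transparent that injective filtrations are generic among all $\epsilon$-perturbations.

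The only mild subtlety—hardly an obstacle—is the degenerate case $d = 1$, where "measure zero" must be read as "finite set" and one cannot appeal to density of an open dense set quite so glibly; but a ball in $\reals$ minus finitely many points is still nonempty, so the induction goes through verbatim. A second point worth a sentence is that injectivity on vertices is exactly what is needed downstream: it guarantees that the induced lower-star filtration refines to one in which simplices enter one at a time with distinct values, which is the setting in which the later expressivity arguments (and gradient-based schemes, as cited) operate. No compactness or continuity hypotheses on $f$ are needed, since $\vertices$ is finite; this is purely a finite, elementary perturbation argument.
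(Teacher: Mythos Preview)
Your proposal is correct and follows essentially the same approach as the paper: both perturb vertex values within open $\epsilon$-balls while avoiding a finite set of forbidden points, relying on the fact that an open ball in $\reals^d$ minus finitely many points is nonempty. The paper organises the iteration over the distinct image values in $\im f$ rather than over the vertices directly, but this is a cosmetic difference; your alternative generic-perturbation argument is a pleasant extra not present in the paper.
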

\begin{proof}
  Let $V = \{v_1, \dots, v_n\}$ be the vertices of a graph and $\im
    f = \{u_1, \dots, u_m\}$ be their images under~$f$. Since~$f$ is not
  injective, we have $m < n$. We resolve non-injective vertex pairs
  iteratively.
  For $u \in \im f$, let $V' := \{v \in V \mid f(v) = u\}$. If $V'$ only
  contains a single element, we do not have to do anything. Otherwise,
  for each $v' \in V'$, pick a new value from $\ball{\epsilon}{u}
    \setminus \im f$, where $\ball{r}{x} \subset \reals^d$ refers to the
  open ball of radius~$r$ around a point~$x$~(for $d = 1$, this
  becomes an open interval in~$\reals$, but the same reasoning applies
  in higher dimensions).
  Since we only ever
  remove a finite number of points, such a new value always exists, and
  we can modify~$\im f$ accordingly. The number of vertex pairs for
  which~$f$ is non-injective decreases by at least one in every
  iteration, hence after a finite number of iterations, we have
  modified~$f$ to obtain~$\tilde{f}$, an \emph{injective} approximation
  to~$f$. By always picking new values from balls of radius $\epsilon$,
  we ensure that $\|f - \tilde{f}\|_\infty < \epsilon$, as required.
\end{proof}

\equivariantfiltrations*
The proof of~\Cref{prop:Equivariant filtrations} is a direct consequence of the
following lemma.
\begin{lemma}
  Let $\mathds{K}$ be any field. If $\mathcal{F}$ is equivariant and
  $\graph \simeq \graph'$, then the persistence modules over
  $\mathds{K}$ obtained by applying the homology functor
  $\homologygroup{k}$ to the chain complexes generated by the sublevel
  set filtrations $f_{K_\graph}$ and $f_{K_{\graph'}}$ are isomorphic
  for any $k\geq 0$.
  \label{lem:Equivariant persistence modules}
\end{lemma}
\begin{proof}
  Let $\iso\colon\graph\to\graph'$ be an isomorphism between
  $\graph$ and $\graph'$, and let $a\in\reals$.
  We claim that $\iso$ induces
  a simplicial isomorphism between $K_{\graph}(a) = f_{K_\graph}^{-1}((-\infty, a])$ and
  $K_{\graph'}(a) = f_{K_{\graph'}}^{-1}((-\infty, a])$. First, note that 
  $\sigma\in K_{\graph}(a)$ implies $\iso(\sigma) \in K_{\graph'}(a)$.
  Then, the restriction $\iso_a$ of the map $\iso\colon K_{\graph} \to K_{\graph'}$ 
  to $K_{\graph}(a)$ is a well-defined simplicial complex morphism between $K_{\graph}(a)$ and
  $K_{\graph'}(a)$.
  In particular, $\iso_a$ is an isomorphism. The injectivity of $\iso_a$
  stems from the fact that $\iso$ is an isomorphism between simplicial
  complexes.
  The surjectivity of $\iso_a$ comes from the fact that
  $f_{K_\graph}(\iso^{-1}(\tau))=f_{K_{\graph'}}(\tau)$ for all $\tau\in
  K_{\graph'}(a)$,
  and thus if $\tau\in K_{\graph'}(a)$, then $\iso^{-1}(\tau)\in K_{\graph}(a)$
  and $\iso_a(\iso^{-1}(\tau)) = \tau$.
  Having prove that $\phi_a$ is an isomorphism, let
  $C_\bullet(K_{\graph}(a))$ and $C_\bullet(K_{\graph'}(a))$ be the
  chain complexes induced by the simplicial complexes $K_{\graph}(a)$
  and $K_{\graph'}(a)$, respectively.
  The previous function $\iso_a$ induces a chain map $C_\bullet(\iso_a)$ between 
  $C_\bullet(K_{\graph}(a))$ and $C_\bullet(K_{\graph'}(a))$ defined as the linear map
  $C_k(\iso_a)$ sending an element $\sigma\in C_k(K_{\graph}(a))$ to
  $\iso_a(\sigma)$ for $k\geq 0$. 
  The proof that $C_\bullet(\iso_a)$ is a chain map can be found 
  in~\textcite[Proposition~4.5]{Nanda21}.
  Concretely, $C_k(\iso_a)$ is an isomorphism because $\iso_a$ is
  a simplicial complex isomorphism that generates a one-to-one
  correspondence between the $k$-simplices of $K_{\graph}(a)$ and
  $K_{\graph'}(a)$. As $C_\bullet(\iso_a)$ is a chain isomorphism,
  $C_\bullet(\iso_a)$ induces isomorphisms between the homology groups
  $\homologygroup{k}(C_\bullet(K_{\graph}(a)))$ and $\homologygroup{k}(C_\bullet(K_{\graph'}(a)))$ for all $k\geq 0$.
  Moreover, these isomorphisms $C_\bullet(\iso_a)$ constitute
  isomorphisms between the persistence modules given by
  $(\homologygroup{k}(C_\bullet(K_{\graph}(a))))_{a\in\reals}$ and
  $(\homologygroup{k}(C_\bullet(K_{\graph'}(a))))_{a\in\reals}$ for all
  $k\geq 0$. To prove it, we only need to show that the diagram
  \begin{equation}
    \begin{tikzcd}[ampersand replacement=\&]
      \homologygroup{k}(C_\bullet(K_{\graph}(a_i))) \arrow[rr, "i", hook] \arrow[d, "C_k(\iso_{a_i})"] \&  \& \homologygroup{k}(C_\bullet(K_{\graph}(a_j))) \arrow[d, "C_k(\iso_{a_j})"] \\
      \homologygroup{k}(C_\bullet(K_{\graph'}(a_i))) \arrow[rr, "i", hook]                             \&  \& \homologygroup{k}(C_\bullet(K_{\graph'}(a_j)))                            
    \end{tikzcd}
  \end{equation}
  commutes for $a_i\leq a_j$.
  This is a consequence of the definition of the isomorphisms $C_\bullet(\iso_a)$.
  Note that $C_k(\iso_{a_i})$ and $C_k(\iso_{a_j})$ are the same map for
  the elements of $C_k(K_{\graph}(a_i))$ for all $k\geq 0$ and $a_i\leq
  a_j$ due to the fact that $\iso_{a_i}=\iso_{a_j}$ for
  $K_{\graph}(a_i)$ for $a_i\leq a_j$ by definition. Thus, given $[c]\in
  \homologygroup{k}(C_\bullet(K_{\graph}(a_i)))$, we have
  $(C_k(\iso_{a_j})\circ i)([c]) = [\iso_{a_i}(c)] = (i\circ
  C_k(\iso_{a_i}))([c])$ for all $a_i\leq a_j$ and $k\geq 0$, as we
  wanted to prove.
\end{proof}

\kwlexpressivity*

\begin{proof}

  The main idea involves harnessing the colours of $k$-tuples.
  Denote by $\mathcal{F}(\graph) = (\simplicialcomplex_\graph, f_{\simplicialcomplex_\graph})$ 
  the output of our filtration generation. For a given graph $\graph$, we set 
  $\simplicialcomplex_\graph$ to be the simplicial complex of dimension zero with $0$-simplices 
  given by the vertices of $\graph$. Let $C_\graph$ be the multiset of colours assigned to 
  the graph $\graph$ by the \kWL algorithm. Without loss of generality, we assume that the \kWL 
  algorithm assigns colours that are always greater or equal than one.

  Now, given a finite multiset $S$ of colours, this is, a finite multiset of elements in 
  $\naturals \cap [2, +\infty)$, we define the representation of $S$ as the concatenation
  \begin{equation*}
  \text{R}(S) = c_1 \concat  \bigparallel_{i=2}^{|S|} 0 \concat r_9(c_i),
  \end{equation*}
  where $S=\multiset{c_1,\hdots, c_{|S|}}$ is the multiset of colours ordered in non-decreasing order, 
  and $r_9(c_i)$ is the representation of the number $c_i$ in bijective base-$9$ numeration. 
  The map from the set of multisets of colours to the set of natural numbers given by $R$ is injective 
  because the representation of individual natural numbers in bijective base-$9$ is injective 
  and because the digit zero does not appear in the base-$9$ representation of the numbers, allowing for 
  the separation of the different colours, and for the recovery of the original multiset.

  Set now $f_{\simplicialcomplex_\graph}$ to be the constant filtration function that assigns to each 
  simplex the value $\text{R}(C_\graph)$. Thus, the filtration generator $\mathcal{F}$ is equivariant 
  because the \kWL algorithm outputs the same multiset of colours for isomorphic graphs $G \simeq G'$, 
  and then the representation of the multiset of colours is the same for both graphs and yield the same 
  constant filtration function. 

  Finally, take two non-isomorphic graphs $\graph$ and $\graph'$. By definition of $\mathcal{F}$, the 
  simplicial complexes $\simplicialcomplex_\graph$ and $\simplicialcomplex_{\graph'}$ contain only vertices, 
  yielding non-empty persistence diagrams only in dimension zero. Particularly, the zero-dimensional persistence 
  diagrams of $\graph$ and $\graph'$ are multisets containing as many non-diagonal points as the number of
  vertices in the graphs of the form $(\text{R}(C_\graph), +\infty)$ and $(\text{R}(C_{\graph'}), +\infty)$, 
  respectively. Since the graphs are non-isomorphic, the multisets and their representations are different 
  and thus the persistence diagrams are different.
\end{proof}

\begin{restatable}{theorem}{onewlexpressivity}
  Given \WL colourings of two graphs $\graph$ and $\graph'$ that are
  different, there exists a filtration of $\graph$ and $\graph'$ such
  that their persistence diagrams in dimension~$0$ are also different.
  \label{thm:1-WL expressivity}
\end{restatable}
\begin{proof}
  Since the colourings are different, there is an iteration~$h$ of \WL
  such that the label sequences of $\graph$ and $\graph'$ are
  different. We thus have at least one colour---equivalently, one
  label---whose count is different.
  Let $\mathcal{L}^{(h)} := \{l_1, l_2, \dots\}$ be an enumeration of
  the finitely many hashed labels at iteration $h$. We can build
  a  filtration function~$f$ by assigning a vertex~$v$ with label
  $l_i$ to its index, i.e.\ $f(v) := i$, and setting $f(v, w) :=
    \max\left\{f(v), f(w)\right\}$ for an edge~$(v, w)$. The resulting
  \mbox{$0$-dimensional} persistence diagrams for $\graph$ and
  $\graph'$, denoted by $\diagram_0$ and $\diagram_0'$, respectively,
  now contain tuples of the form~$(i, j)$. Moreover, each vertex is
  guaranteed to give rise to \emph{exactly} one such pair since each
  vertex creates a connected component in \mbox{$0$-dimensional}
  persistent homology.
  Letting $\multiplicity{i,j}\mleft(\diagram_0\mright)$ refer to the
  multiplicity of a tuple in~$\diagram_0$, we know that, since the label
  count is different, there is \emph{at least} one tuple $(k, l)$ with
  $\multiplicity{k,l}\mleft(\diagram_0\mright) \neq
    \multiplicity{k,l}\mleft(\diagram_0'\mright)$.
  Hence, $\diagram_0 \neq \diagram_0'$.
\end{proof}

Prior to stating \cref{lem:CFITriangles} concerning CFI graphs, we must
first provide a precise definition of these structures. These
definitions can also be found in~\textcite[Section~6]{Cai92a}.
CFI graphs are constructed from a family of graphs $\{X_k\}_{k\geq 2}$. 
For each $k \geq 2$, we construct a pair of non-isomorphic CFI 
graphs $(\graph_k, H_k)$ such that $\graph_k$ and $H_k$ cannot be distinguished 
by the \kWL[(k-1)] algorithm but can be distinguished by the \kWL[k] algorithm.
Concretely, $X_k$ is defined as the graph $(V_k, E_k)$ where
\begin{inparaenum}[(i)]
  \item $V_k=A_k\cup B_k\cup M_k$ such that $A_k = \{a_i\}_{i=1}^k$,
  $B_k = \{b_i\}_{i=1}^k$, and $M_k = \{m_S:S\subseteq\{1,\hdots,k\}, 
  |S|\text{ is even}\}$, and
  \item $E_k = \{(m_S, a_i):i\in S\} \cup \{(m_S, b_i): i\not\in S\}$.
\end{inparaenum} 
Given any finite and connected graph $\graph$ with at least degree two for
all vertices, we can use the graphs $X_k$ to build new graphs 
$X(\graph)$ and $\tilde{X}(\graph)$ that allows us to construct the CFI 
graphs. $X(\graph)$ is built as follows. For each vertex $v$ of $\graph$,
we replace $v$ by a copy of $X_{\degree{v}}$, called $X(v)$. Then, for each
edge $\{u, v\}$ of $\graph$, we associate to each of its endpoints $u$ and $v$
one of the pairs $\{a_i, b_i\}$ from $X(u)$ and $X(v)$, denoting the pairs as
$\{a(\{u, v\}, u), b(\{u, v\}, u)\}$ and $\{a(\{u, v\}, v), b(\{u, v\}, v)\}$, 
respectively. Finally, for each of edges $\{u,v\}$ of $\graph$, we add the edges
$\{a(\{u, v\}, u), a(\{u, v\}, v)\}$ and $\{b(\{u, v\}, u), b(\{u, v\}, v)\}$ 
to $\graph$,
discarding the original edge. The graph $\tilde X(\graph)$ is constructed 
in the same way as $X(\graph)$, but we arbitrarily choose one edge 
$\{u,v\}$ of $\graph$ and, instead of adding the previous two edges, we add
the edges $\{a(\{u, v\}, u), b(\{u, v\}, v)\}$ and $\{a(\{u, v\}, v), 
b(\{u, v\}, u)\}$.
Finally, $G_k$ and $H_k$ are given by $G_k = X(T_k)$ and $H_k = \tilde
X(T_k)$ where $T_k$ is a degree-three graph with separator size $k$. In
the original construction, these graphs are also coloured, but we will
not need this information for the proof of \cref{lem:CFITriangles}.

\begin{restatable}{lemma}{CFItrianglesGeneral}
  For any finite and connected graph $\graph$, $X(\graph)$ and
  $\tilde{X}(\graph)$ have no cycles of length $3$, and thus, no cliques
  of size $3$ or more.
  \label{lem:CFITrianglesGeneral}
\end{restatable}
\begin{proof}
  Let $\cycle$ be a cycle of $X(\graph)$ of length three. 
  First, note that there are no cycles of length $3$ in the 
  graphs $X_k$ for any $k\geq 1$. This
  is because all edges go from $M_k$ to a subset $A_k$ or $B_k$. 
  This implies that, in order to have a cycle, one needs at least 
  $4$ edges, passing twice by the set $M_k$.
  Therefore, the cycle $\cycle$ must contain at least one edge $e$
  connecting two subgraphs $X(u)$ and $X(v)$ for $u$ and $v$ original 
  vertices of $\graph$. The endpoints of this edge must be connected
  by design to two disjoint sets of type $M_k$ of the subgraphs 
  $X(u)$ and $X(v)$, respectively. However, if the cycle $\cycle$ is
  of length three, this means that the two endpoints of $e$ are connected
  to the same element, implying that both sets $M_k$ are not disjoint and
  arriving at a contradiction. The proof for $\tilde X(\graph)$ is analogous.
\end{proof}

\begin{restatable}{corollary}{CFItriangles}
  For any $k\geq 2$, the CFI graphs
  $G_k$ and $H_k$ have no cycles of length $3$, and 
  thus, no cliques of size $3$ or more.
  \label{lem:CFITriangles}
\end{restatable}
\begin{proof}
  This is a direct consequence of \cref{lem:CFITrianglesGeneral}. 
\end{proof}

\begin{figure}
  [tbp]
  \centering
  \includegraphics[width=\textwidth]{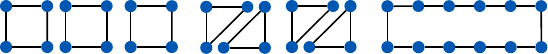}
  \caption{The image shows three graphs composed of three disjoint 
  cycles of length four, four disjoint cycles of length three, and a
  single cycle of length twelve. For \WL, all vertices are initially
  assigned the same colour, in this case blue. During the first iteration, 
  the \texttt{RELABEL} function for each vertex receives the tuple 
  $(\textcolor{blue}{\bullet}, \multiset{\textcolor{blue}{\bullet}, \textcolor{blue}{\bullet}})$. 
  Consequently, all vertices in the three graphs receive the same new colour, 
  keeping intact the initial partitions, finishing the algorithm. 
  Thus, the final multiset of colours is the same for all three cases
  (twelve blue colours), making \WL unable to distinguish between these three 
  graphs.}
  \label{fig:connected_components_wl}
\end{figure}

\begin{restatable}{theorem}{connected_components_wl}
  Let $n=ab$ with integers $a,b\geq 3$. Then, the \WL test cannot
  distinguish between $a$ cycles of length $b$, $b$ cycles of length
  $a$, and a single cycle of length $n$.
  \label{lem:Connected components WL}
\end{restatable}
\begin{proof}
  In all three cases, each graph consists of $n$ vertices, with each vertex 
  having a degree of two. Initially, all vertices across the three graphs 
  are assigned the same colour $c$. During the first iteration, each node 
  receives as input the tuple $\left(c, \{\{c, c\}\}\right)$. 
  Consequently, after this iteration, all vertices in the three graphs 
  obtain the same colour assignment.
  Since all vertices have the same colour at the end of the first iteration, 
  the algorithm terminates. This occurs because the partition of nodes 
  generated by the colours after the first iteration is identical to the partition 
  from the initial assignment. As a result, the \WL test fails to distinguish 
  between the three cases.
\end{proof}

An example for $a=3$, $b=4$ is provided in~\cref{fig:connected_components_wl}. 
We end this section with two results concerning \emph{other} graph
properties that are being captured by persistent homology.

\diameterbound*

\begin{proof}
  We can provide a procedure to obtain an upper bound~$d$ of
  $\diam(\graph)$ alongside the calculation of $\diagram_0$. To this
  end, we set $d = 0$. While calculating~$\diagram_0$ with
  \cref{alg:Connected components}, we check for each edge whether it
  is a destroyer, or a regular edge. If the edge is a destroyer,
  we increase $d$ by one; otherwise, we do nothing.
  This procedure works because $\diam(\graph)$
  is upper bounded by the diameter of the minimum spanning tree of~$\graph$. Our
  estimate~$d$ counts the number of edges in such a tree. We thus have
  $\diam(\graph) \leq d$. The bound is tight for some graphs, such as
  line graphs.
\end{proof}

\girthbound*

\begin{proof}
  Similar to \cref{thm:Diameter}, we can use the calculation of
  $\diagram_0$, the persistence diagram in dimension~$0$, of the
  filtration function to obtain an upper bound of the girth of the
  graph. We calculate~$\diagram_0$ with
  \cref{alg:Connected components}, checking once again for each edge
  whether it is a creator or destroyer. If the edge is a
  creator, we
  stop and set our upper bound of the girth~$g$ to be the number of
  vertices in the connected component
  that contains the endpoints of the edge.
  Since the addition of the respective edge is
  guaranteed to create a cycle---the edge is a creator---the cycle
  has to make use of at most as many vertices as the number of vertices
  in the connected component to which it belongs.
  Our estimate~$g$ thus constitutes an upper bound of the
  girth of the graph.
\end{proof}

\Cref{thm:Diameter,thm:Girth} indicate that persistent
homology captures more than `just' topological information about
a graph. We substantiate these theorems with empirical results
concerning different filtrations and other graph properties in
\cref{sec:Predicting Graph Properties}, thus showing how a topological perspective
complements and enriches graph-learning tasks.

\section{Topology and the Weisfeiler--Leman Hierarchy}\label{sec:Topology and WL}

In the following, we want to briefly extend the argumentation of the
expressivity of persistent homology with respect to the
Weisfeiler--Leman hierarchy.
Since \WL is oblivious to certain topological structures such as
cycles~\autocite{Arvind15a}, the existence of graphs with different
Betti number counts proves that persistent homology is \emph{strictly
  more expressive} than \WL. For example, consider a graph consisting of
the union of two triangles, i.e.\
\begin{tikzpicture}[baseline]
  \begin{scope}[scale=0.25]
    \coordinate (A) at (0.0, 0.5);
    \coordinate (B) at (0.5, 0.0);
    \coordinate (C) at (0.5, 1.0);

    \coordinate (D) at (1.0, 1.0);
    \coordinate (E) at (1.5, 0.5);
    \coordinate (F) at (1.0, 0.0);
  \end{scope}

  \foreach \c in {A,B,...,F}
    {
      \filldraw[black] (\c) circle (1pt);
    }

  \draw (A) -- (B) -- (C) -- cycle;
  \draw (D) -- (E) -- (F) -- cycle;
\end{tikzpicture}. This graph has $\betti{0} = \betti{1} = 2$ since it
consists of two connected components and two cycles. If we change the
connectivity slightly to obtain a hexagon, i.e.\
\begin{tikzpicture}[baseline]
  \begin{scope}[scale=0.25]
    \coordinate (A) at (0.0, 0.5);
    \coordinate (B) at (0.5, 0.0);
    \coordinate (C) at (0.5, 1.0);

    \coordinate (D) at (1.0, 1.0);
    \coordinate (E) at (1.5, 0.5);
    \coordinate (F) at (1.0, 0.0);
  \end{scope}

  \foreach \c in {A,B,...,F}
    {
      \filldraw[black] (\c) circle (1pt);
    }
  \draw (A) -- (C) -- (D) -- (E) -- (F) -- (B) -- cycle;
\end{tikzpicture}, we obtain a graph with $\betti{0} = \betti{1} = 1$.
\WL is not able to distinguish between these graphs, but persistent
homology can, since the Betti numbers of the graph are still encoded
in the persistence diagram as essential features.
Note that \cref{thm:1-WL expressivity} does \emph{not} apply to
arbitrary filtrations since the theorem requires knowing the correct labels
assigned by $\WL$. Finding filtration functions that are able to split
graphs in a manner that is provably equivalent to $\WL$ remains an open
research question.

\section{Additional Expressivity Experiments}\label{app:Expressivity Experiments}

This section contains additional expressivity experiments that had to be
excluded from the main paper for reasons of space.

\subsection{Additional Results for Connected Cubic Graphs}

We start our supplementary experiments by distinguishing connected cubic graphs,
i.e.\ \mbox{$3$-regular} graphs. These graphs cannot be distinguished
by \WL, but they can be distinguished by \kWL[2]~\autocite{Bollobas82a, Morris21a}.
As such, they provide a good example of how different filtrations
harness different types of graph information. \cref{tab:Connected
  cubic graphs} shows the results. We first observe that the degree
filtration is incapable of distinguishing graphs for $k=1$. This is
a direct consequence of the regularity---since the function is
constant on the graph, persistent homology cannot capture any
variability. This changes, however, when higher-order
structures---triangles---are included for $k=2$. We also observe that
the Laplacian-based filtration for $k=2$ exhibits strong empirical
performance; in the absence of additional information, the spectral
properties captured by the Laplacian help in distinguishing graphs.
The Ollivier--Ricci curvature filtration is performing similarly well also for
the same dimension, while it outperforms the Laplacian filtration for $k=1$.
In contrast to the Laplacian-based filtration, it does not require the
calculation of eigenvalues, which may be prohibitive for larger
graphs.\footnote{%
  Ollivier--Ricci curvature requires solving optimal transport
  problems, for which highly efficient approximative solvers are
  available~\autocite{Cuturi13a}.
}
Finally, we see that the Forman--Ricci curvature filtration, a purely
combinatorial quantity depending only on local counts, is consistently
outperformed by the Ollivier--Ricci curvature for $k=1$. However, for $k=1$,
the Forman--Ricci filtration outperforms the Laplacian filtration.
Finally, the Vietoris--Rips filtration is incapable of distinguishing the
graphs for $k=1$, and performs similarly to the Forman--Ricci curvature for
$k=2$, suggesting that filtrations based on distances are not capable of
capturing the graph structure in an optimal way.

\begin{table}[btp]
  \sisetup{
    table-format    = 1.2,
    round-mode      = places,
    round-precision = 2,
    detect-all      = true,
    detect-weight   = true,
  }
  \centering
  \caption{%
    Success rate~($\uparrow$) of distinguishing pairs of
    \emph{connected cubic graphs} when using five different
    filtrations at varying expansion levels
    of the graph~(denoted by~$k$). Due to the regularity of each
    graph, $k = 3$ is omitted since the clique complex of the graph is
    exactly the same as for $k = 2$. These graphs can be distinguished
    by \kWL[2] but not by \WL.
  }%
  \label{tab:Connected cubic graphs}
  \smallskip
  \begingroup
  \let\b=\bfseries
  \let\e=\itshape
  \setlength{\tabcolsep}{4pt}
  \begin{tabular}{lSSSSSSSSSS}
    \toprule
    \multirow{5}{*}{\e Data} & \multicolumn{5}{c}{$k = 1$}     & \multicolumn{5}{c}{$k = 2$}                       \\
    \cmidrule(lr){2-11}
                             & \multicolumn{10}{c}{\e Filtration}                                                  \\
    \cmidrule(lr){2-11}
                             & {D}   & {O}     & {F}    & {L}  & {V}  & {D}    & {O}    & {F}    & {L} & {V}       \\
    \midrule
    \texttt{cub06}           & 0.00  & \b1.00  & \b1.00 & 0.00 & 0.00 & \b1.00 & \b1.00 & \b1.00 & \b1.00 & \b1.00 \\
    \texttt{cub08}           & 0.00  &   0.90  &   0.70 & 0.00 & 0.00 &   0.90 &   0.90 &   0.90 & \b1.00 &   0.90 \\
    \texttt{cub10}           & 0.00  &   0.98  &   0.66 & 0.00 & 0.00 &   0.81 &   0.99 &   0.88 & \b1.00 &   0.85 \\
    \texttt{cub12}           & 0.00  &   0.99  &   0.64 & 0.00 & 0.00 &   0.80 &   1.00 &   0.87 & \b1.00 &   0.87 \\
    \texttt{cub14}           & 0.00  &   0.99  &   0.62 & 0.00 & 0.00 &   0.79 &   1.00 &   0.86 & \b1.00 &   0.89 \\
    \bottomrule
  \end{tabular}
  \endgroup
\end{table}

\subsection{Additional Results for the BREC Data Set}\label{sec:Additional Results for the BREC Data Set}

\begin{table}[tbp]
  \centering
  \caption{
    Success rate~($\uparrow$) for distinguishing pairs of instances of the
    \emph{BREC data set} when using five different filtrations at varying
    expansion levels of the graph~(denoted by~$k$). Due to combinatorial
    constraints, we did not calculate the Vietoris--Rips filtration for $k=4$.
    Legend and number of graphs per category: \texttt{B}~(Basic, 60), \texttt{R}~(Regular, 100),
    \texttt{E}~(Extension, 100), \texttt{C}~(CFI, 100), \texttt{4, 20}~($4$-Vertex
    Condition), \texttt{D}~(Distance-Regular, 20) graphs, respectively
    and \texttt{A}~(average over full data set, 400).
  }
  \label{tab:BREC itemised}
  \smallskip
  \sisetup{
    table-format    = 1.2,
    round-mode      = places,
    round-precision = 2,
    detect-all      = true,
    detect-weight   = true,
  }
  \begingroup
  \let\b=\bfseries
  \let\e=\itshape
  \setlength{\tabcolsep}{2pt}
  \begin{tabular}{lSSSSS}
    \toprule
    & \multicolumn{5}{c}{$k = 1$}                                         \\
    \midrule\multirow{2.5}{*}{\e Data}
                             & \multicolumn{5}{c}{\e Filtration}        \\
    \cmidrule(lr){2-6}
                             & {D}   & {O}    & {F}   & {L}     & {V}   \\
    \midrule                                                                
    \texttt{Basic} (60)      & 0.033 & 0.933  & 0.867 & \b1.000 & 0.000 \\
    \texttt{Regular} (100)   & 0.000 & 0.420  & 0.320 &   0.000 & 0.000 \\
    \texttt{Extension} (100) & 0.070 & 0.760  & 0.440 &   0.940 & 0.000 \\
    \texttt{CFI} (100)       & 0.030 & 0.030  & 0.030 & \b0.060 & 0.030 \\
    \texttt{4-VC} (20)       & 0.000 & 0.000  & 0.000 &   0.000 & 0.000 \\
    \texttt{DR} (20)         & 0.000 & 0.000  & 0.000 & \b0.050 & 0.000 \\
    Average (400)            & 0.030 & 0.443  & 0.328 &   0.403 & 0.008 \\
    \midrule
                             & \multicolumn{5}{c}{$k = 2$}                                 \\
    \midrule\multirow{2.5}{*}{\e Data}
                             & \multicolumn{5}{c}{\e Filtration}        \\
    \cmidrule(lr){2-6}
                             & {D}   & {O}     & {F}   & {L}     & {V}  \\
    \midrule                                                            
    \texttt{Basic} (60)      & 0.783 & \b1.000 & 0.983 & \b1.000 & 0.517\\
    \texttt{Regular} (100)   & 0.390 &   0.540 & 0.500 &   0.480 & 0.390\\
    \texttt{Extension} (100) & 0.260 &   0.920 & 0.590 & \b1.000 & 0.110\\
    \texttt{CFI} (100)       & 0.030 &   0.030 & 0.030 & \b0.060 & 0.030\\
    \texttt{4-VC} (20)       & 0.000 &   0.000 & 0.000 &   0.000 & 0.000\\
    \texttt{DR} (20)         & 0.000 &   0.000 & 0.000 & \b0.050 & 0.000\\
    Average (400)            & 0.287 &   0.522 & 0.427 &   0.537 & 0.210\\
    \midrule 
                             & \multicolumn{5}{c}{$k = 3$}                                                                                           \\
    \midrule\multirow{2.5}{*}{\e Data}
                             & \multicolumn{5}{c}{\e Filtration}                                                                     \\
    \cmidrule(lr){2-6}
                             & {D}     & {O}     & {F}     & {L}     & {V}     \\
    \midrule
    \texttt{Basic} (60)      &   0.833 & \b1.000 & 0.983   & \b1.000 &   0.583 \\
    \texttt{Regular} (100)   &   0.850 &   0.930 & 0.910   &   0.930 &   0.850 \\
    \texttt{Extension} (100) &   0.290 &   0.920 & 0.590   & \b1.000 &   0.160 \\
    \texttt{CFI} (100)       &   0.030 &   0.030 & 0.030   & \b0.060 &   0.030 \\
    \texttt{4-VC} (20)       & \b1.000 & \b1.000 & \b1.000 & \b1.000 & \b1.000 \\
    \texttt{DR} (20)         &   0.000 &   0.000 &   0.000 & \b0.050 & \b0.050 \\
    Average (400)            &   0.468 &   0.670 &   0.580 &   0.700 &   0.400 \\
    \midrule 
                             & \multicolumn{5}{c}{$k = 4$}                                                                                           \\
    \midrule\multirow{2.5}{*}{\e Data}
                             & \multicolumn{5}{c}{\e Filtration}                                                                     \\
    \cmidrule(lr){2-6}
                             & {D}      & {O}     & {F}     & {L}     & {V}  \\
    \midrule
    \texttt{Basic} (60)      &   0.833  & \b1.000 &   0.983 & \b1.000 & {---}\\
    \texttt{Regular} (100)   &   0.890  &   0.970 &   0.950 &   0.970 & {---}\\
    \texttt{Extension} (100) &   0.290  &   0.920 &   0.590 & \b1.000 & {---}\\
    \texttt{CFI} (100)       &   0.030  &   0.030 &   0.030 &   0.060 & {---}\\
    \texttt{4-VC} (20)       & \b1.000  & \b1.000 & \b1.000 & \b1.000 & {---}\\
    \texttt{DR} (20)         &   0.000  &   0.000 &   0.000 &   0.050 & {---}\\
    Average (400)            &   0.477  &   0.680 &   0.590 &   0.710 & {---}\\
    \bottomrule
  \end{tabular}
  \endgroup
\end{table}

\cref{tab:BREC itemised} shows BREC data set results, itemised by the
value of~$k$, while \cref{tab:BREC summary} provides a summary and
performance comparison of the persistent homology results with other
baselines. 

\begin{table}[tbp]
  \sisetup{
      table-format    = 1.2,
      round-mode      = places,
      round-precision = 2,
      detect-all      = true,
      detect-weight   = true,
  }
  \centering
  \caption{
      Success rate~($\uparrow$) for distinguishing pairs of instances of the
      \emph{BREC data set} when using four different
      filtrations for $k=4$. Vietoris--Rips filtration not computed for
      $k=4$ due to computational constraints. The data sets \texttt{Regular},
      \texttt{4-Vertex Condition}, and \texttt{Distance Regular}, from~\cref{tab:BREC}
      are merged into the \texttt{All regular} data set. \textcolor{ForestGreen}{Green} indicates 
      the best performing algorithm, while \textcolor{orange}{orange} indicates the second best.
  }
  \label{tab:BREC summary}
  \vspace{4pt}
  \begingroup
  \let\b=\bfseries
  \setlength{\tabcolsep}{4pt}
  \begin{tabular}{l S S S S S S }
      \toprule
      \multirow{2}{*}{Data}      & \multicolumn{2}{c}{SOTA} & \multicolumn{4}{c}{Filtration ($k=4$)}                                                                                     \\
      \cmidrule(lr){2-3} \cmidrule(lr){4-7}
                                 & {$3$-WL}                 & {$N_2$}                                & {D}   & {O}                   & {F}                   & {L}                   \\
      \midrule
      \texttt{Basic} (60)        & {\color{ForestGreen}1.000}       & {\color{ForestGreen}1.000}                     & 0.833 & {\color{ForestGreen}1.000}    & {\color{orange}0.983} & {\color{ForestGreen}1.000}    \\
      \texttt{All regular} (140) & 0.357                    & {\color{ForestGreen}0.986}                     & 0.779 & 0.836                 & 0.821                 & {\color{orange}0.843} \\
      \texttt{Extension} (100)   & {\color{ForestGreen}1.000}       & \color{ForestGreen}{1.000}                     & 0.290 & {\color{orange}0.920} & 0.590                 & {\color{ForestGreen}1.000}    \\
      \texttt{CFI} (100)         & {\color{ForestGreen}0.600}       & 0.000                                  & 0.030 & 0.030                 & 0.030                 & {\color{orange}0.060} \\
      \midrule
      Average (400)              & 0.675                    & {\color{ForestGreen}0.745}                     & 0.477 & 0.680                 & 0.590                 & {\color{orange}0.710}\\
      \bottomrule
  \end{tabular}
  \endgroup
\end{table}

\subsection{Additional Figures for Graph-Property Prediction Tasks}

\cref{fig:ogbg-molhiv_radii} shows the distributions of graph properties
that we predict in the main paper in \cref{sec:Predicting Graph
Properties}.

\begin{figure}[tbp]
  \centering
  \includegraphics[width=\textwidth]{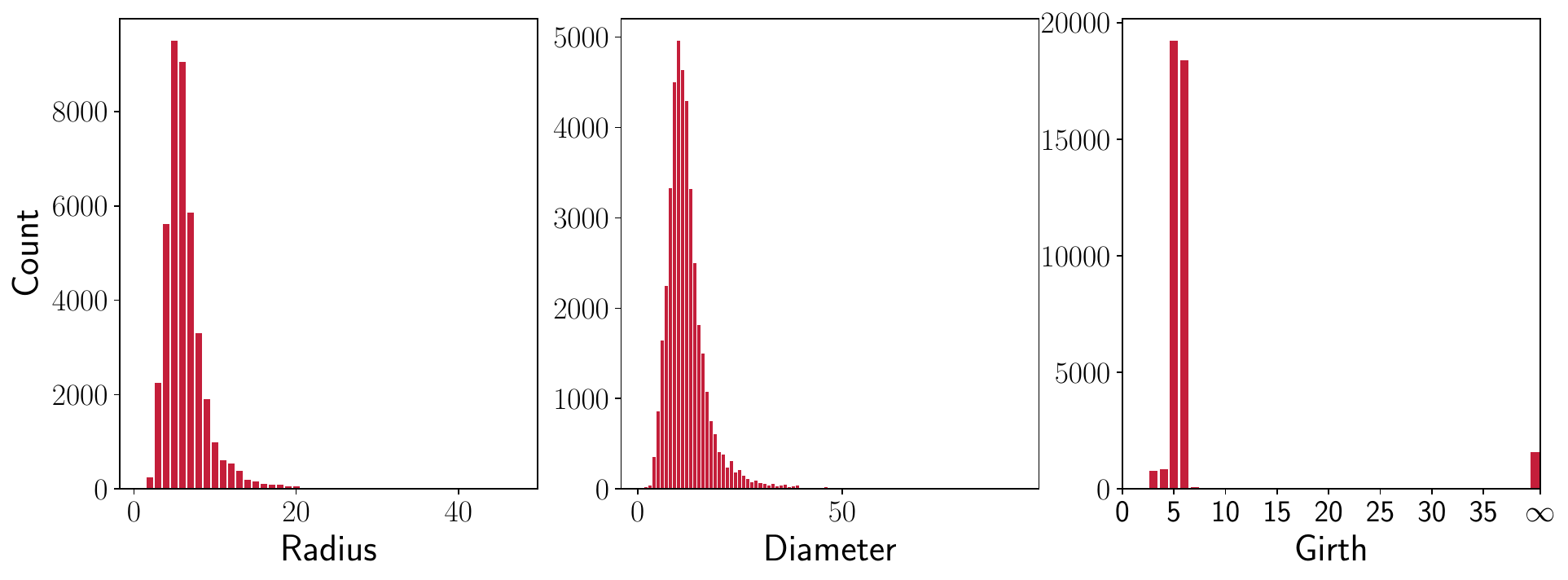}
  \caption{
    Distribution of maximum radii, maximum diameters, and girths in the 
    \texttt{ogbg-molhiv} molecular graph
    data set~\autocite{hu20a}. The values are concentrated on the lower
    end of the spectrum, with medians of~$6$, $11$, and $5$ and maximums of~$47$,
     $93$, and $36$, respectively. The maximum value for the girth is computed
      without taking into account infinite values.}
  \label{fig:ogbg-molhiv_radii}
\end{figure}

\section{Additional Graph Classification Experiments}\label{app:Graph Classification Experiments}

\begin{table*}[tbp]
  \sisetup{
    table-format    = 2.2(2),
    round-mode      = places,
    round-precision = 2,
    detect-all      = true,
    detect-weight   = true,
    separate-uncertainty = true,
    retain-zero-uncertainty=true
  }
  \centering
  \caption{
    Graph classification average accuracy~($\uparrow$) and standard deviation
    in test for the experiments with data set \texttt{IMDB-Binary} (\texttt{I}),
    \texttt{PROTEINS} (\texttt{P}), \texttt{MUTAG} (\texttt{M}), and \texttt{NCI1} 
    (\texttt{N}), and  ROC-AUC ($\uparrow$)
    for the \texttt{ogbg-molhiv} (\texttt{O}) test data set. The results are averaged over
    three runs using a stratified $3$-means for all data set except for
    \texttt{ogbg-molhiv}, where only one run is performed. The best
    results are highlighted in bold. Experiments with the Vietoris--Rips
    filtration for the \texttt{PROTEINS} and \texttt{molhiv} are not reported
    for $k=2$ due to out-of-resources errors during execution. $S_1$ and $S_2$
    refer to accuracy of the most effective methods reported in~\autocite{OBray21a} 
    and~\autocite{Hofer20}, respectively. The abbreviation DS stands for data set.
    }
  \label{tab:Classification results}
  \begingroup
  \let\b=\bfseries
  \setlength{\tabcolsep}{2pt}
  \begin{tabular}{l S S S S S S S}
    \toprule
    \multicolumn{8}{c}{$k = 1$}                                                                                                                  \\
    \midrule\multirow{2}{*}{DS}
    &  \multicolumn{2}{c}{SOTA}    &\multicolumn{5}{c}{Filtration}\\
    \cmidrule(lr){2-3}\cmidrule(lr){4-8}
    & {$S_1$} & {$S_2$} & {D}                              & {O}                        & {F}                        & {L}               & {V}                 \\
    \midrule
    \texttt{P} & \b0.75 \pm 0.05 & 0.74 \pm 0.03 & 0.70 \pm 0.01                & 0.74 \pm 0.01 & 0.72 \pm 0.02          & 0.72 \pm 0.02 & 0.70 \pm 0.00 \\
    \texttt{I} & 0.74 \pm 0.04 & \b0.75 \pm 0.05 & \b0.75 \pm 0.02       & 0.71 \pm 0.01          & 0.69 \pm 0.03          & 0.65 \pm 0.01 & 0.68 \pm 0.03 \\
    \texttt{M}    & 0.87 \pm 0.08 & {-} & 0.86 \pm 0.01                & 0.87 \pm 0.04          & \b0.90 \pm 0.04 & 0.79 \pm 0.06 & 0.87 \pm 0.01 \\
    \texttt{N}     & {-} & 0.71 \pm 0.02 & 0.68 \pm 0.00                & 0.73 \pm 0.00          & 0.69 \pm 0.01          & 0.67 \pm 0.00 & 0.66 \pm 0.01 \\
    \texttt{O}   & {-} & {-} & \b0.50      & 0.50          & \b0.50 & 0.50 & 0.50 \\
    \midrule\multicolumn{8}{c}{$k = 2$}                                                                                                          \\
    \midrule\multirow{2}{*}{DS}
    &  \multicolumn{2}{c}{SOTA}    &\multicolumn{5}{c}{Filtration}\\
    \cmidrule(lr){2-3}\cmidrule(lr){4-8}
    & {$S_1$} & {$S_2$} & {D}                              & {O}                        & {F}                        & {L}               & {V}                 \\
    \midrule
    \texttt{P} & 0.75 \pm 0.05 & 0.74 \pm 0.03 & 0.70 \pm 0.02                & 0.73 \pm 0.02          & 0.70 \pm 0.01          & 0.68 \pm 0.01 & {-}               \\
    \texttt{I} & 0.74 \pm 0.04 & 0.75 \pm 0.05 & 0.73 \pm 0.03                & 0.71 \pm 0.02          & 0.72 \pm 0.03          & 0.68 \pm 0.03 & 0.66 \pm 0.02 \\
    \texttt{M}    & 0.87 \pm 0.08 & {-} & 0.86 \pm 0.01                & 0.89 \pm 0.04          & 0.89 \pm 0.03          & 0.81 \pm 0.04 & 0.87 \pm 0.03 \\
    \texttt{N}     & {-} & 0.71 \pm 0.02  & 0.68 \pm 0.00                & \b0.74 \pm 0.01 & 0.70 \pm 0.00          & 0.68 \pm 0.00 & 0.69 \pm 0.01 \\
    \texttt{O}   & {-} & {-} & \b0.50       & \b0.50 & 0.50          & 0.50 & {-}               \\
    \bottomrule
  \end{tabular}
  \endgroup
\end{table*}
To assess the capacity of persistent homology in graph learning tasks,
we have designed a set of graph classification experiments that use the
filtrations introduced in~\cref{sec:Experiments} to extract persistence diagrams
from graph classification data set to perform inference on them.
Two fundamental differences between our approach and the one used in~\autocite{Hofer20} are that, 
we do not learn an optimal filtration and also we do not use deep learning models to 
perform the classification, as we are interested into the capacity of persistent homology 
to perform classification, and not in its combination with neural networks. For this 
endeavour, we train a Random Forest classifier on the persistent 
images~\autocite{Adams17a} computed from the persistence diagrams extracted from 
the input graphs using the previous filtrations up to dimension $k=2$. We test our 
approach in the \texttt{MUTAG}~\autocite{Debnath91a}, 
\texttt{IMDB-Binary}~\autocite{Yanardag15a}, \texttt{PROTEINS}~\autocite{Borgwardt05a}, 
\texttt{NC1}~\autocite{Wale06a}, and \texttt{ogbg-molhiv}~\autocite{hu20a} data set. 
Except for \texttt{ogb-molhiv}, we perform a Stratified $3$-Fold experiment and provide 
the average and standard deviation of the multiple runs. For \texttt{ogbg-molhiv}, we 
perform only one experiment instance with the official train and test splits.
The results are shown in~\cref{tab:Classification results}. 
We observe that, although we
discard the data set features when using persistent homology, we are able to achieve
suitably good results on some of the data sets, such as \texttt{MUTAG} and
\texttt{IMDB-Binary}~\autocite{errica20a}, even surpassing state-of-the-art results reported
in \textcite{Hofer20} and \textcite{OBray21a} for \texttt{IMDB-Binary}, \texttt{MUTAG}, 
and \texttt{NCI1}.
For the \texttt{ogbg-molhiv}, however, we obtain consistent ROC-AUC values
of $0.5$, which is the same as random guessing.
Our hypothesis is that molecular graphs rely on the annotated data set
features to perform well, and the isolated topological information is
not enough to perform the classification. The success of the experiments
suggests that persistent homology is capable of capturing essential
structural information about the graphs to classify and thus, suggests
that persistent homology can be expressive in practice. We leave a more
structured and comprehensive benchmark for persistent homology-like
methods for graph-learning tasks to future work.

\section{Additional results on alpha complexes}
\label{sec:Additional results on alpha complexes}
As the last set of experiments, we repeat the experiments from~\Cref{sec:Experiments}
except for the the property prediction tasks on random graphs, but using an alpha complex  
filtration. \emph{Alpha complex filtrations}~\autocite[Section~III.4]{edelsbrunner2010computational} are 
\emph{geometric} filtrations, meaning that they are computed from point clouds in the 
Euclidean space, in our case, coming from embeddings of graph vertices. Concretely, we embed 
the vertices of our graphs in $\reals^3$ using 
\emph{Laplacian eigenmaps}~\autocite{lap_eigenmaps}. Note that the embedding map is 
an arbitrary choice, and we leave the exploration of other embeddings for future work, making 
the use of alpha complexes in this context a proof of concept.
\Cref{tab:BREC alpha complexes,tab:Classification alpha complexes,tab:Connected cubic graphs alpha complexes,tab:Minimal Cayley graphs alpha complexes,tab:Strongly-regular graphs alpha complexes,tab:Radii prediction experiments alpha complexes} contain the BREC, classification, connected cubic, minimal Cayley, strongly-regular, and property prediction experiments, respectively, using the alpha complex filtration.

For BREC, we observe that the alpha complex filtration obtains almost perfect success rate 
in distinguishing the non-isomorphic graphs, failing only on the CFI category, where 
the alpha complex filtration still shows an strong success rate, outperforming the other 
filtrations and non-topological methods by a large margin. For the other isomorphism datasets (connected cubic, minimal Cayley, strongly-regular), alpha 
complexes obtain perfect accuracy in all cases.

For the classification tasks, though, we observe that the alpha complex filtration performs 
worse than the best filtrations at each task worse than the other state-of-the-art methods.
Due to the good results in the graph isomorphism tasks, we hypothesize that the alpha complex 
is highly expressive (and sensitive) and that the Random Forest with the persistence images 
overfits the data without further regularization. 

Finally, for the property prediction experiments, we observe that the alpha complex 
filtration, as in the classification tasks, performs worse than the other filtrations 
except for the girth prediction, where it obtains better results than curvature-based and 
degree filtrations.

Overall, our preliminary results suggest that the alpha complex filtration is highly expressive and capable of
capturing the graph structure, but it is also sensitive to noise and overfitting, at least 
for Laplacian eigenmaps embeddings.

\begin{table}[tbp]
  \centering
  \caption{
    Success rate~($\uparrow$) for distinguishing pairs of instances of the
    \emph{BREC data set} when using the alpha complex filtration described 
    at~\Cref{sec:Additional results on alpha complexes} and persistence diagrams up to 
    varying dimensions~(denoted by~$k$).
    Legend and number of graphs per category: \texttt{B}~(Basic, 60), \texttt{R}~(Regular, 100),
    \texttt{E}~(Extension, 100), \texttt{C}~(CFI, 100), \texttt{4, 20}~($4$-Vertex
    Condition), \texttt{D}~(Distance-Regular, 20) graphs, respectively
    and \texttt{A}~(average over full data set, 400). For the results using the other filtrations,
    see~\cref{tab:BREC itemised}.
  }
  \label{tab:BREC alpha complexes} 
  \smallskip
  \sisetup{
    table-format    = 1.2,
    round-mode      = places,
    round-precision = 2,
    detect-all      = true,
    detect-weight   = true,
  }
  \begingroup
  \let\b=\bfseries
  \let\e=\itshape
  \setlength{\tabcolsep}{2pt}
\begin{tabular}{l S S S}
  \toprule
  Data & \multicolumn{1}{c}{$k = 1$} & \multicolumn{1}{c}{$k = 2$} & \multicolumn{1}{c}{$k = 3$} \\
  \midrule
  Basic (60) & \b1.000 & \b1.000 & \b1.000 \\
  Regular (100) & \b1.000 & \b1.000 & \b1.000 \\
  Extension (100) & \b1.000 & \b1.000 & \b1.000 \\
  CFI (100) & 0.630 & 0.800 & \b0.870 \\
  4-Vertex\_Condition (20) & \b1.000 & \b1.000 & \b1.000 \\
  Distance\_Regular (20) & \b1.000 & \b1.000 & \b1.000 \\
  \cmidrule(lr){2-4}
  Average (400) & 0.907 & 0.950 & \b0.968 \\
  \bottomrule
  \end{tabular}
  \endgroup
\end{table}

\begin{table}[tbp]
  \centering
  \caption{Graph classification average accuracy~($\uparrow$) and standard deviation
  in test for the experiments with data set \texttt{IMDB-Binary} (\texttt{I}),
  \texttt{PROTEINS} (\texttt{P}), \texttt{MUTAG} (\texttt{M}), and \texttt{NCI1} 
  (\texttt{N}), and  ROC-AUC ($\uparrow$)
  for the \texttt{ogbg-molhiv} (\texttt{O}) test data set 
  when using the alpha complex filtration described 
    at~\Cref{sec:Additional results on alpha complexes} and persistence diagrams up to 
    varying dimensions~(denoted by~$k$). The results are averaged over
  three runs using a stratified $3$-means for all data set except for
  \texttt{ogbg-molhiv}, where only one run is performed. The best
  results are highlighted in bold. For the results using the other filtrations,
  see~\cref{tab:Classification results}.}
  \label{tab:Classification alpha complexes}
  \smallskip
  \sisetup{
    table-format    = 2.2(2),
    round-mode      = places,
    round-precision = 2,
    detect-all      = true,
    detect-weight   = true,
    separate-uncertainty = true,
    retain-zero-uncertainty=true
  }
  \begingroup
  \let\b=\bfseries
  \let\e=\itshape
  \setlength{\tabcolsep}{2pt}
\begin{tabular}{l S S}
  \toprule
  Data set & \multicolumn{1}{c}{$k = 1$} & \multicolumn{1}{c}{$k = 2$} \\
  \midrule
  \texttt{P} & \b0.66 \pm 0.02 & 0.66 \pm 0.01 \\
  \texttt{I} & \b0.59 \pm 0.04 & 0.57 \pm 0.04 \\
  \texttt{M} & 0.79 \pm 0.03 & \b0.79 \pm 0.03 \\
  \texttt{N} & \b0.62 \pm 0.01 & 0.60 \pm 0.02 \\
  \texttt{O} & 0.48 \pm 0.00 & \b0.50 \pm 0.00 \\
  \bottomrule
  \end{tabular}
  \endgroup
\end{table}

\begin{table}[tbp]
  \centering
  \caption{Success rate~($\uparrow$) of distinguishing pairs of
  \emph{connected cubic graphs} when using the alpha complex filtration described 
  at~\Cref{sec:Additional results on alpha complexes} and persistence diagrams up to 
  varying dimensions~(denoted by~$k$). These graphs can be distinguished
  by \kWL[2] but not by \WL. For the results using the other filtrations,
  see~\cref{tab:Connected cubic graphs}.}
  \label{tab:Connected cubic graphs alpha complexes}
  \smallskip
  \sisetup{
    table-format    = 1.2,
    round-mode      = places,
    round-precision = 2,
    detect-all      = true,
    detect-weight   = true,
  }
  \begingroup
  \let\b=\bfseries
  \let\e=\itshape
  \setlength{\tabcolsep}{2pt}
\begin{tabular}{l S S}
  \toprule
  Data & \multicolumn{1}{c}{$k = 1$} & \multicolumn{1}{c}{$k = 2$} \\
  \midrule
  \texttt{cub06}  & \b{1.00} & \b{1.00} \\
  \texttt{cub08}  & \b{1.00} & \b{1.00} \\
  \texttt{cub10}  & \b{1.00} & \b{1.00} \\
  \texttt{cub12}  & \b{1.00} & \b{1.00} \\
  \texttt{cub14}  & \b{1.00} & \b{1.00} \\
  \bottomrule
\end{tabular}
\endgroup
\end{table}

\begin{table}[tbp]
  \centering
  \caption{Success rate~($\uparrow$) for distinguishing pairs of \emph{minimal
  Cayley graphs} when using the alpha complex filtration described 
  at~\Cref{sec:Additional results on alpha complexes} and persistence diagrams up to 
  varying dimensions~(denoted by~$k$). For the results using the other filtrations,
  see~\cref{tab:Minimal Cayley graphs}.}
  \label{tab:Minimal Cayley graphs alpha complexes}
  \smallskip
  \sisetup{
    table-format    = 1.2,
    round-mode      = places,
    round-precision = 2,
    detect-all      = true,
    detect-weight   = true,
  }
  \begingroup
  \let\b=\bfseries
  \let\e=\itshape
  \setlength{\tabcolsep}{2pt}
\begin{tabular}{l S S}
  \toprule
  Data & \multicolumn{1}{c}{$k = 1$} & \multicolumn{1}{c}{$k = 2$} \\
  \midrule
  \texttt{cay12}  & \b{1.00} & \b{1.00} \\
  \texttt{cay16}  & \b{1.00} & \b{1.00} \\
  \texttt{cay20}  & \b{1.00} & \b{1.00} \\
  \texttt{cay24}  & \b{1.00} & \b{1.00} \\
  \texttt{cay32}  & \b{1.00} & \b{1.00} \\
  \texttt{cay36}  & \b{1.00} & \b{1.00} \\
  \texttt{cay60}  & \b{1.00} & \b{1.00} \\
  \texttt{cay63}  & \b{1.00} & \b{1.00} \\
  \bottomrule
\end{tabular}
\endgroup
\end{table}

\begin{table}[tbp]
  \centering
  \caption{Success rate~($\uparrow$) for distinguishing pairs of
  \emph{strongly-regular graphs} when using the alpha complex filtration described 
  at~\Cref{sec:Additional results on alpha complexes} and persistence diagrams up to 
  varying dimensions~(denoted by~$k$). \kWL[2] cannot distinguish between
  any of these pairs. For the results using the other filtrations,
  see~\cref{tab:Strongly-regular graphs}.}
  \label{tab:Strongly-regular graphs alpha complexes}
  \smallskip
  \sisetup{
    table-format    = 1.2,
    round-mode      = places,
    round-precision = 2,
    detect-all      = true,
    detect-weight   = true,
  }
  \begingroup
  \let\b=\bfseries
  \let\e=\itshape
  \setlength{\tabcolsep}{2pt}
\begin{tabular}{l S S S}
  \toprule
  Data & \multicolumn{1}{c}{$k = 1$} & \multicolumn{1}{c}{$k = 2$} & \multicolumn{1}{c}{$k = 3$} \\
  \midrule
  \texttt{16622}  & \b{1.00} & \b{1.00} & \b{1.00} \\
  \texttt{251256} & \b{1.00} & \b{1.00} & \b{1.00} \\
  \texttt{261034} & \b{1.00} & \b{1.00} & \b{1.00} \\
  \texttt{281264} & \b{1.00} & \b{1.00} & \b{1.00} \\
  \texttt{291467} & \b{1.00} & \b{1.00} & \b{1.00} \\
  \texttt{351668} & \b{1.00} & \b{1.00} & \b{1.00} \\
  \texttt{351899} & \b{1.00} & \b{1.00} & \b{1.00} \\
  \texttt{361446} & \b{1.00} & \b{1.00} & \b{1.00} \\
  \texttt{401224} & \b{1.00} & \b{1.00} & \b{1.00} \\
  \bottomrule
\end{tabular}
\endgroup
\end{table}

\begin{table}[tbp]
  \centering
  \caption{Accuracy~($\uparrow$) when predicting the properties of graphs in the
  \texttt{ogbg-molhiv} molecular graph data set~\autocite{hu20a} using the alpha complex filtration described 
  at~\Cref{sec:Additional results on alpha complexes} and persistence diagrams up to 
  varying dimensions~(denoted by~$k$). For the results using the other filtrations,
  see~\cref{tab:Radii prediction experiments}.
  }
  \label{tab:Radii prediction experiments alpha complexes}
  \smallskip
  \sisetup{
    table-format    = 1.2,
    round-mode      = places,
    round-precision = 2,
    detect-all      = true,
    detect-weight   = true,
  }
  \begingroup
  \let\b=\bfseries
  \let\e=\itshape
  \setlength{\tabcolsep}{2pt}
\begin{tabular}{l S S}
  \toprule
  Data & \multicolumn{1}{c}{$k = 1$} & \multicolumn{1}{c}{$k = 2$} \\
  \midrule
  Diameter & 0.00 & \b{0.01} \\
  Girth     & \b{0.41} & 0.40 \\
  Radius   & 0.00 & \b{0.01} \\
  \bottomrule
\end{tabular}
\endgroup
\end{table}

\section{Implementation and Hardware details}
\label{sec:Implementation and Hardware}

Our implementation is based on Python~3. We plan on releasing the code
under a BSD-3-Clause license. For review purposes, the code has been
attached to the supplementary materials. Please use \verb|pip install -r requirements.txt| on the root folder of the code to set up the environment. 
The experiments were executed on a server with an 
AMD EPYC 7452 (128) @ 2.350GHz CPU, 503GiB of RAM memory, 
no GPU acceleration, and
Ubuntu 22.04.4 LTS with the 6.5.0-28-generic Linux kernel. 
Each experiment was executed on a single process. Not completed experiments 
in the main text were due to process termination by the operating system
with the previous constraints.

\begin{compactitem}
  \item To perform the BREC experiments, use the script \verb|BREC_experiments.py|.
  \item To perform the classification experiments, use the script \verb|classification_tasks.py|.
  \item To perform the three non-BREC expressivity experiments (cubic, regular, and Cayley graphs), use the script \verb|simple_isomorphism_experiments.py|.
  \item To perform the graph properties prediction experiments, use the code \verb|property_prediction_experiments.py|.
  \item To perform the graph properties prediction experiments for the Watts--Strogatz and Erdős--Rényi random graphs, use the code \verb|predict_diameter_random_graphs.py|.
\end{compactitem}

The easiest way to perform the experiments in all the cases is using the flag \texttt{-a}, that executes all the experiments without the need of specifying specific parameters about the datasets and filtrations. Once the experiments are performed, the tables from the paper can be generated using the flag \texttt{-t} on the same scripts.

\section{Licenses}
\label{sec:Licenses}

We do not redistribute any existing data sets, but briefly mention their
licenses here. Expressivity experiments on known graphs make use of
an existing database~\autocite{Coolsaet23a}, which does not directly
specify any licensing requirements but asks for a citation.
The BREC data set and the \texttt{ogbg-molhiv} data set are distributed
under a MIT license.
All other graph-classification data sets, i.e.\  \texttt{IMDB-Binary},
\texttt{PROTEINS}, \texttt{MUTAG}, and \texttt{NCI1}, do not specify
a license. They are distributed as part of the `TUDatasets' repository, 
and can be accessed via \url{https://chrsmrrs.github.io/datasets/}.
We make our code available under a 3-Clause BSD License.

\end{document}